\theoremstyle{plain}
\newtheorem{theorem}{Theorem}
\newtheorem{corollary}{Corollary}
\newtheorem{lemma}{Lemma}
\theoremstyle{definition}
\newtheorem{obs}{Observation}
\newcommand{\norm}[1]{\left|\left| #1 \right|\right|}
\definecolor{darkgreen}{rgb}{0,0.6,0}
\definecolor{darkred}{rgb}{0.6,0,0}
\definecolor{darkblue}{rgb}{0.0,0,0.6}
\newcommand{\node}[2]{v^{(#1)}_{#2}}
\newcommand{\hidden}[2]{h^{(#1)}_{#2}}
\newcommand{\mbold}[1]{\mathbb{#1}}
\newcommand{\perperp}[1]{{}^{\perp}#1_{\perp}}
\newcommand{\parpar}[1]{{}^{\parallel}#1_{\parallel}}
\newcommand{\parperp}[1]{{}^{\parallel}#1_{\perp}}
\newcommand{\perpar}[1]{{}^{\perp}#1_{\parallel}}
\newcommand{\perpart}[1]{#1_{\perp}}
\newcommand{\parpart}[1]{#1_{\parallel}}
\icmltitlerunning{On the Expressive Power of Deep Neural Networks}
\begin{document}

\twocolumn[
\icmltitle{On the Expressive Power of Deep Neural Networks}

\begin{icmlauthorlist}
\icmlauthor{Maithra Raghu}{co,goo}
\icmlauthor{Ben Poole}{st}
\icmlauthor{Jon Kleinberg}{co}
\icmlauthor{Surya Ganguli}{st}
\icmlauthor{Jascha Sohl Dickstein}{goo}
\end{icmlauthorlist}

\icmlaffiliation{co}{Cornell University}
\icmlaffiliation{goo}{Google Brain}
\icmlaffiliation{st}{Stanford University}

\icmlcorrespondingauthor{Maithra Raghu}{maithrar@gmail.com}
\vskip 0.3in

]

% TODO change after acceptance
% \printAffiliationsOnly{}
\printAffiliationsAndNotice{}

\begin{abstract}
%\bcom{this first sentence needs some love}
    We propose a new approach to the problem of \textit{neural network expressivity}, which seeks to characterize how structural properties of a neural network family affect the functions it is able to compute. Our approach is based on an interrelated set of measures of expressivity, unified by the novel notion of \textit{trajectory length}, which measures how the output of a network changes as the input sweeps along a one-dimensional path. Our findings can be summarized as follows:
    \begin{enumerate}
      \item[(1)] \textit{The complexity of the computed function grows exponentially with depth.} We design measures of expressivity that capture the non-linearity of the computed function. Due to how the network transforms its input, these measures grow exponentially with depth.
      \item[(2)] \textit{All weights are not equal (initial layers matter more).} We find that trained networks are far more sensitive to their lower (initial) layer weights: they are much less robust to noise in these layer weights, and also perform better when these weights are optimized well.
      \item[(3)] \textit{Trajectory Regularization works like Batch Normalization.} We find that batch norm stabilizes the learnt representation, and based on this propose a new regularization scheme, \textit{trajectory regularization.}
    \end{enumerate}
\end{abstract}

\section{Introduction}

Deep neural networks have proved astoundingly effective at a wide range of empirical tasks, from image classification \citep{krizhevsky2012imagenet} to 
playing Go \citep{silver2016mastering}, and even modeling human learning \citep{piech2015deep}.

Despite these successes, understanding of how and why neural network architectures achieve their empirical successes is still lacking. This includes even the fundamental question of \textit{neural network expressivity}, how the architectural properties of a neural network (depth, width, layer type) affect the resulting functions it can compute, and its ensuing performance.

This is a foundational question, and there is a rich history of prior work
addressing expressivity in neural networks. However, it has been challenging to derive conclusions that
provide both theoretical generality with respect to choices of
architecture as well as meaningful insights into practical performance.

Indeed, the very first results on this question take a highly theoretical approach, from using functional analysis to show universal approximation results \citep{hornik1989multilayer,cybenko1989approximation}, to analysing expressivity via comparisons to boolean circuits \citep{maass1994comparison} and studying network VC dimension \citep{bartlett1998almost}. While these results provided theoretically general conclusions, the shallow networks they studied are very different from the deep models that have proven so successful in recent years.

In response, several recent papers have focused on understanding the benefits of depth for neural networks \citep{pascanu2013number,montufar2014number,eldan2015power,telgarsky2015representation,martens2013representational,bianchini2014complexity}. These results are compelling and take modern architectural changes into account, but they only show that a specific choice of weights for a deeper network results in inapproximability by a shallow (typically one or two hidden layers) network.

In particular, the goal of this new line of work has been to establish
lower bounds --- showing separations between shallow and deep networks ---
and as such they are based on hand-coded constructions of specific
network weights.  Even if the weight values used in these constructions
are robust to small perturbations (as in
\citep{pascanu2013number,montufar2014number}), the functions
that arise from these constructions tend toward extremal properties
by design, and there is no evidence
that a network trained on data ever resembles such a function.

% What has been largely missing from this line of investigation is therefore a set of fundamental issues.
This has meant that a set of fundamental questions about neural network expressivity has remained largely unanswered.
First, we lack a good understanding of the ``typical'' case
rather than the worst case in these bounds for deep networks,
and consequently have no way to evaluate whether the hand-coded
extremal constructions provide a reflection of the complexity
encountered in more standard settings.
Second, we lack an understanding of upper bounds to match the
lower bounds produced by this prior work; 
do the constructions used to date place us near the limit of
the expressive power of neural networks,
or are there still large gaps?
Finally, if we had an understanding of these two issues, we might begin
to draw connections between network expressivity and observed performance.

\paragraph*{\bf Our contributions: Measures of Expressivity and their Applications}  In this paper, we address this set of challenges by defining
and analyzing an interrelated set of {\em measures of expressivity}
for neural networks; our framework applies to 
a wide range of standard architectures, 
independent of specific weight choices.  
We begin our analysis at the start of training, after random initialization, and later derive insights connecting network expressivity and performance.

Our first measure of expressivity is based on the notion of
an {\em activation pattern}: in a network where the units compute functions
based on discrete thresholds, we can ask which units are above or below
their thresholds (i.e. which units are ``active'' and which are not).
For the range of standard architectures that we consider, 
the network is essentially computing a linear function once we fix
the activation pattern; thus, counting the number of possible 
activation patterns provides a concrete way of measuring the complexity
beyond linearity that the network provides.
We give an upper bound on the number of possible activation patterns,
over any setting of the weights. This bound is tight as it matches the hand-constructed lower bounds
of earlier work \citep{pascanu2013number,montufar2014number}.

Key to our analysis is the notion of a {\em transition}, in which 
changing an input $x$ to a nearby input $x + \delta$ changes the 
activation pattern.  
We study the behavior of transitions as we pass the input along
a one-dimensional parametrized trajectory $x(t)$.
Our central finding is that the {\em trajectory length} 
grows exponentially in the depth of the network.

Trajectory length serves as a unifying notion in our measures
of expressivity, and it leads to insights into the behavior of 
trained networks.
Specifically, we find that the exponential growth in trajectory
length as a function of depth implies that small adjustments 
in parameters lower in the network induce larger changes
than comparable adjustments higher in the network.
We demonstrate this phenomenon through
experiments on MNIST and CIFAR-10, where the network displays much
less robustness to noise in the lower layers, and better
performance when they are trained well. We also explore the effects of regularization methods on trajectory length as the network trains and propose a less computationally intensive method of regularization, \textit{trajectory regularization}, that offers the same performance as batch normalization. 

The contributions of this paper are thus:
\begin{enumerate}
    \item[(1)] \textit{Measures of expressivity}: We propose easily computable measures of neural network expressivity that capture the expressive power inherent in different neural network architectures, independent of specific weight settings.
    \item[(2)] \textit{Exponential trajectories:} We find an exponential depth dependence displayed by these measures, through a unifying analysis in which we study
how the network transforms its input by measuring \textit{trajectory length}
    \item[(3)] \textit{All weights are not equal (the lower layers
matter more)}: We show how these results on trajectory length
suggest that optimizing weights in lower layers of the network is 
particularly important.
\item[(4)] \textit{Trajectory Regularization} Based on understanding the effect of batch norm on trajectory length, we propose a new method of regularization, trajectory regularization, that offers the same advantages as batch norm, and is computationally more efficient.
\end{enumerate}

In prior work \cite{deep_chaos}, we 
studied the propagation of
\textit{Riemannian curvature} through random networks by developing a
mean field theory approach. Here, we take an approach grounded in computational
geometry, presenting measures with a combinatorial flavor and
explore the consequences during and after training.

\section{Measures of Expressivity}
Given a neural network of a certain architecture $A$ (some depth, width, layer types), we have an associated function, $F_A(x; W)$, where $x$ is an input and $W$ represents all the parameters of the network. Our goal is to understand how the behavior of $F_A(x; W)$ changes as $A$ changes, for values of $W$ that we might encounter during training, and across inputs $x$.

The first major difficulty comes from the high dimensionality of the input. Precisely quantifying the properties of $F_A(x;W)$ over the entire input space is intractable. As a tractable alternative, we study simple one dimensional \textit{trajectories} through input space.
More formally:

\textbf{Definition:} Given two points, $x_0, x_1 \in \mbold{R}^m$, we say $x(t)$ is a trajectory (between $x_0$ and $x_1$) if $x(t)$ is a curve parametrized by a scalar $t \in [0, 1]$, with $x(0) = x_0$ and $x(1) = x_1$. 

Simple examples of a trajectory would be a line ($x(t) = tx_1 + (1 - t)x_0$) or a circular arc ($x(t) = \cos(\pi t/2)x_0 + \sin(\pi t/2)x_1$), but in general $x(t)$ may be  more complicated, and potentially not expressible in closed form.

Armed with this notion of trajectories, we can begin to define measures of expressivity of a network $F_A(x; W)$ over trajectories $x(t)$. 

\subsection{Neuron Transitions and Activation Patterns}\label{subsec_hyperplanes}
In \citep{montufar2014number} the notion of a ``linear region'' is introduced. Given a neural network with piecewise linear activations (such as ReLU or hard tanh), the function it computes is also piecewise linear, a consequence of the fact that composing piecewise linear functions results in a piecewise linear function. So one way to measure the ``expressive power'' of different architectures $A$ is to count the number of linear pieces (regions), which determines how nonlinear the function is.

In fact, a change in linear region is caused by a \textit{neuron transition} in the output layer. More precisely:

\textbf{Definition} For fixed $W$, we say a neuron with piecewise linear region \textit{transitions} between inputs $x, x+ \delta$ if its activation function switches linear region between $x$ and $x + \delta$.

So a ReLU transition would be given by a neuron switching from off to on (or vice versa) and for hard tanh by switching between saturation at $-1$ to its linear middle region to saturation at $1$. For any generic trajectory $x(t)$, we can thus define $\mathcal{T}(F_A(x(t); W))$ to be the number of transitions undergone by output neurons (i.e. the number of linear regions) as we sweep the input $x(t)$. Instead of just concentrating on the output neurons however, we can look at this pattern over the \textit{entire} network. We call this an \textit{activation patten}:

\textbf{Definition} We can define $\mathcal{AP}(F_A(x; W))$ to be the \textit{activation pattern} -- a string of form $\{0, 1\}^{\text{num neurons}}$ (for ReLUs) and $\{-1, 0, 1\}^{\text{num neurons}}$ (for hard tanh) of the network encoding the linear region of the activation function of \textit{every} neuron, for an input $x$ and weights $W$. 

Overloading notation slightly, we can also define (similarly to transitions) $\mathcal{A}(F_A(x(t); W))$ as the number of distinct activation patterns as we sweep $x$ along $x(t)$. As each distinct activation pattern corresponds to a different linear function of the input, this combinatorial measure captures how much more expressive $A$ is over a simple linear mapping.

Returning to Montufar et al, they provide a construction i.e. a \textit{specific} set of weights $W_0$, that results in an exponential increase of linear regions with the depth of the architectures. They also appeal to Zaslavsky's theorem \citep{stanleyhyperplanes} from the theory of hyperplane arrangements to show that a shallow network, i.e. \textit{one} hidden layer, with the same number of parameters as a deep network, has a much smaller number of linear regions than the number achieved by their choice of weights $W_0$ for the deep network.

More formally, letting $A_1$ be a fully connected network with one hidden layer, and $A_l$ a fully connected network with the same number of parameters, but $l$ hidden layers, they show
\[ \forall W \mathcal{T}(F_{A_1}([0, 1]; W)) < \mathcal{T}(F_{A_1}([0, 1]; W_0) \tag{*} \]

We derive a much more general result by considering the `global' activation patterns over the \textit{entire} input space, and prove that for any fully connected network, with \textit{any} number of hidden layers, we can upper bound the number of linear regions it can achieve, over \textit{all} possible weight settings $W$. This upper bound is asymptotically \textit{tight}, matched by the construction given in \citep{montufar2014number}. Our result can be written formally as:

\begin{theorem} \label{thm act count}
\emph{(Tight) Upper Bound for Number of Activation Patterns}
Let $A_{(n,k)}$ denote a fully connected network with $n$ hidden layers of width $k$, and inputs in $\mbold{R}^m$. Then the number of activation patterns $\mathcal{A}(F_{A_{n,k}}(\mbold{R}^m; W)$ is upper bounded by $O(k^{mn})$ for ReLU activations, and $O((2k)^{mn})$ for hard tanh. 
\end{theorem}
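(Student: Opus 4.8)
The plan is to count activation patterns layer by layer, using the theory of hyperplane arrangements. The central observation is that once we fix the activation pattern of all neurons in layers $1$ through $\ell$, the map from the input $x \in \mbold{R}^m$ to the pre-activations of layer $\ell+1$ is \emph{affine} on the corresponding region of input space. This holds because each piecewise-linear activation (ReLU or hard tanh), restricted to a single fixed linear region, is an affine function of its argument, and a composition of affine maps is again affine. Consequently, the boundary along which any neuron in layer $\ell+1$ transitions --- where its affine pre-activation crosses a threshold --- is (the restriction of) a hyperplane in $\mbold{R}^m$.

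First I would establish the per-layer counting step. Fix a region $R \subseteq \mbold{R}^m$ on which the activation pattern of the first $\ell$ layers is constant. On $R$, each of the $k$ neurons in layer $\ell+1$ contributes one threshold hyperplane for ReLU (the locus where its pre-activation equals $0$), and two threshold hyperplanes for hard tanh (the loci where it equals $-1$ and $+1$). We thus obtain an arrangement of $k$ hyperplanes (ReLU) or $2k$ hyperplanes (hard tanh) in $\mbold{R}^m$. By Zaslavsky's theorem, such an arrangement partitions $\mbold{R}^m$ into at most $\sum_{i=0}^m \binom{k}{i} = O(k^m)$ cells (respectively $\sum_{i=0}^m \binom{2k}{i} = O((2k)^m)$). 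Since every piece of $R$ cut out by these hyperplanes lies in a distinct cell of the full arrangement, the number of distinct layer-$(\ell+1)$ activation patterns realizable within $R$ is at most this same quantity --- and, crucially, this bound is uniform over the choice of $R$ and of the weights $W$.

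Then I would assemble the layers by induction. Writing $P_\ell$ for the total number of distinct activation patterns over layers $1,\dots,\ell$, the per-layer step gives the recursion $P_{\ell+1} \le P_\ell \cdot O(k^m)$ for ReLU (and $P_{\ell+1} \le P_\ell \cdot O((2k)^m)$ for hard tanh), since each of the $P_\ell$ regions spawns at most $O(k^m)$ new sub-patterns. Unrolling over $n$ layers yields $P_n \le \left(\sum_{i=0}^m \binom{k}{i}\right)^n = O(k^{mn})$ for ReLU and $O((2k)^{mn})$ for hard tanh, which is exactly the claimed bound.

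The main obstacle is making this inductive step fully rigorous: although the affine map from $x$ to the layer-$(\ell+1)$ pre-activations differs from region to region, I must argue that the multiplicative factor $O(k^m)$ applies \emph{uniformly}, regardless of which region $R$ we sit in and regardless of the weight matrix $W$. The key point to nail down is that intersecting a hyperplane arrangement with an arbitrary subset $R$ can only decrease the number of realized cells relative to the whole space, so the worst-case Zaslavsky count dominates no matter how the preceding layers have carved up the input --- and that this continues to hold at non-generic weight settings, where degenerate arrangements produce strictly fewer cells than the generic maximum.
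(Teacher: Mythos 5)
Your proposal is correct and follows essentially the same route as the paper: fix the activation pattern of the first $\ell$ layers so that the next layer's pre-activations are affine on each region (the paper's Theorem \ref{thm_carve_inp_space}), bound the cells of the resulting arrangement of $k$ (ReLU) or $2k$ (hard tanh) hyperplanes by $\sum_{i=0}^m \binom{k}{i}$, and multiply this factor across the $n$ layers to obtain $O(k^{mn})$ and $O((2k)^{mn})$. The only cosmetic difference is that you cite Zaslavsky's theorem for the per-arrangement region count, whereas the paper proves that same bound from scratch via the elementary deletion--restriction recursion $r(k,m) = r(k-1,m) + r(k-1,m-1)$ (its Theorem \ref{prop-num-regions}); your explicit remarks on uniformity over regions and degenerate weights are points the paper leaves implicit.
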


From this we can derive a \textit{chain} of inequalities. Firstly, from the theorem above we find an upper bound of $\mathcal{A}(F_{A_{n,k}}(\mbold{R}^m; W))$ over all $W$, i.e. 
\[ \forall \hspace{0.5mm} W \hspace{1mm} \mathcal{A}(F_{A_{(n,k)}})(\mbold{R}^m; W) \leq U(n,k,m) 
.\]

Next, suppose we have $N$ neurons in total. Then we want to compare (for wlog ReLUs), quantities like $U(n', N/n', m)$ for different $n'$. 

But $U(n', N/n', m) = O((N/n')^{mn'})$, and so, noting that the maxima of $\left(\frac{a}{x}\right)^{mx}$ (for $a > e$) is $x = a/e$, we get, (for $n, k > e$), in comparison to (*), 

\[ U(1,N,m) < U(2, \frac{N}{2}, m) < \cdots \hspace{5mm}  \]
\[ \hspace{5mm} \cdots < U(n-1, \frac{N}{n-1}, m) < U(n, k, m) \]

We prove this
%in an elementary fashion, with
via 
an inductive proof on regions in a hyperplane arrangement. The proof can be found in the Appendix. 
As noted in the introduction, this result differs from earlier lower-bound constructions in that it is an upper bound that applies to {\em all} possible sets of weights.
Via our analysis, we also prove

\begin{figure}
\centering
\adjincludegraphics[width=0.9\linewidth]{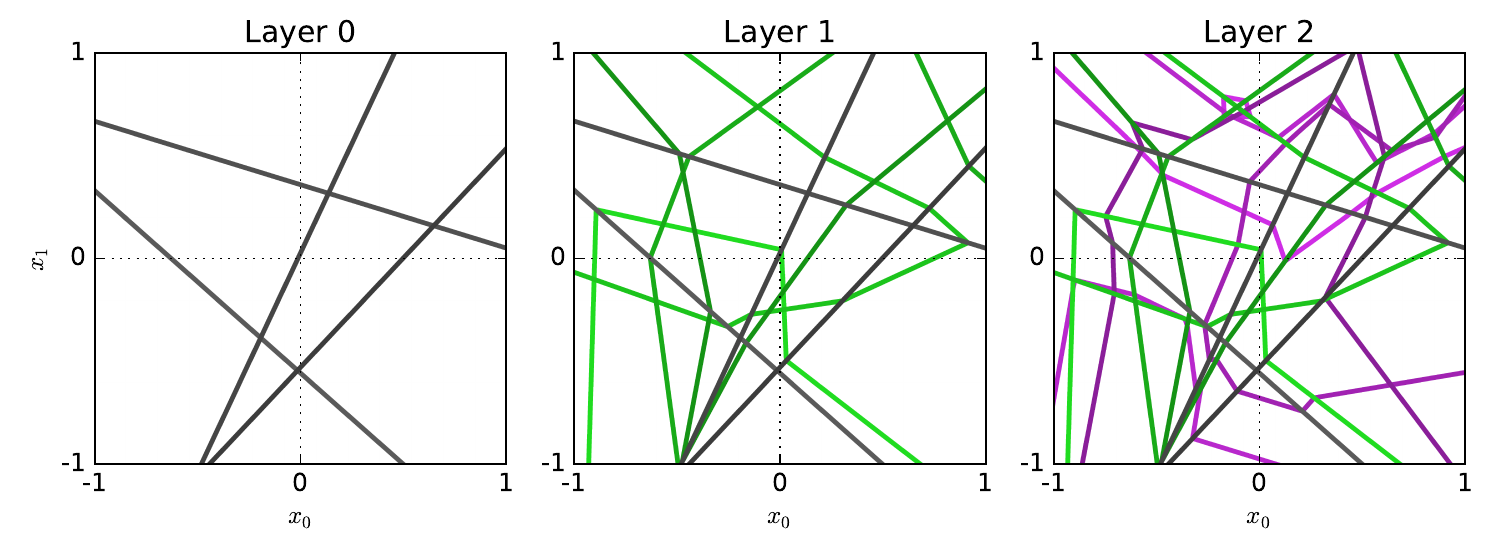}
\caption{\small Deep networks with piecewise linear activations subdivide input space into convex polytopes. 
We take a three hidden layer ReLU network, with input $x \in \mbold{R}^2$, and four units in each layer. The left pane shows activations for the first layer only. As the input is in $\mbold{R}^2$, neurons in the first hidden layer have an associated line in $\mbold{R}^2$, depicting their activation boundary. The left pane thus has four such lines. For the second hidden layer each neuron again has a line in input space corresponding to on/off, but this line is \textit{different} for each region described by the first layer activation pattern. So in the centre pane, which shows activation boundary lines corresponding to second hidden layer neurons in green (and first hidden layer in black), we can see the green lines `bend' at the boundaries. (The reason for this bending becomes apparent through the proof of Theorem \ref{thm_carve_inp_space}.) Finally, the right pane adds the on/off boundaries for neurons in the third hidden layer, in purple. These lines can bend at both black and green boundaries, as the image shows. This final set of convex polytopes corresponds to all activation patterns for this network (with its current set of weights) over the unit square, with each polytope representing a different linear function.
}
\label{fig convex polytope}
\end{figure}

\begin{theorem}\label{thm_carve_inp_space}
\emph{Regions in Input Space}
Given the corresponding function of a neural network $F_{A}(\mbold{R}^m; W)$ with  ReLU or hard tanh activations, the input space is partitioned into convex polytopes, with $F_{A}(\mbold{R}^m; W)$ corresponding to a different \textit{linear} function on each region. 
\end{theorem}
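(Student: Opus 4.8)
The plan is to induct on the number of hidden layers, maintaining two invariants simultaneously: (i) the accumulated activation boundaries cut $\mbold{R}^m$ into convex polytopes, and (ii) on each such polytope the map from the input $x$ to the activations of the current layer is affine. Together these force the claim, since once the full activation pattern of every neuron is fixed each neuron occupies a single linear piece, and the network becomes a composition of affine maps, hence an affine function of $x$. Distinct regions carry distinct activation patterns and so (generically) distinct linear functions.

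For the base case I would consider the first hidden layer. Each of its $k$ neurons computes an affine preactivation $a_i(x) = \langle w_i, x\rangle + b_i$ followed by a piecewise-linear nonlinearity. The locus where neuron $i$ switches linear region is exactly where $a_i(x)$ equals a fixed threshold ($0$ for ReLU; $\pm 1$ for hard tanh), which is a hyperplane (a pair of parallel hyperplanes for hard tanh). These hyperplanes form an arrangement whose cells are intersections of half-spaces, hence convex; on each cell every first-layer neuron sits in a fixed linear piece, so the first-layer activation is affine in $x$. This establishes both invariants for $\ell = 1$.

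For the inductive step, suppose the invariants hold through layer $\ell$, giving a partition into convex polytopes $\{P_j\}$ with the layer-$\ell$ activation $h^{(\ell)}(x)$ affine on each $P_j$. Fixing one $P_j$, a neuron in layer $\ell+1$ computes an affine function of $h^{(\ell)}$, which restricted to $P_j$ is an affine function of $x$; its linear-region boundaries are the level sets where this restricted affine function meets its thresholds, i.e. hyperplanes intersected with $P_j$. Refining $P_j$ by the corresponding half-space constraints yields smaller regions that remain convex (a convex set intersected with half-spaces), and on each refined region the layer-$(\ell+1)$ activation is affine in $x$ as a composition of affine maps. Performing this over all $P_j$ refines the whole partition while preserving both invariants, closing the induction.

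The step requiring care is the preservation of convexity, which appears to be threatened by the ``bending'' of boundaries visible in Figure~\ref{fig convex polytope}: the activation boundary of a deep neuron is not a single hyperplane of $\mbold{R}^m$. The resolution I would emphasize is that the bending is a phenomenon \emph{across} different polytopes $P_j$, not within any one of them --- inside a fixed $P_j$ the relevant preactivation is a genuine affine function of $x$, so its threshold set is an honest hyperplane and the induced subdivision is by half-spaces. Consequently each final region is an intersection of finitely many half-spaces and is convex, even though the global boundary surfaces are only piecewise linear. With this observation the argument is complete, and the hard tanh case is handled identically by allowing two threshold hyperplanes (at $\pm 1$) per neuron rather than one.
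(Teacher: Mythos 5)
Your proposal is correct and follows essentially the same argument as the paper's proof: an induction over layers in which, within each convex polytope where the activation pattern of all earlier layers is fixed, the preactivation of a deeper neuron is a genuine affine function of the input, so its threshold set is a hyperplane restricted to that polytope and the refinement by half-spaces preserves convexity. Your explicit resolution of the ``bending'' issue matches the paper's observation (noted in the caption of Figure~\ref{fig convex polytope}) that the boundary hyperplane equation is only valid within a single region.
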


This result is of independent interest for optimization -- a linear function over a convex polytope results in a well behaved loss function and an easy optimization problem. Understanding the density of these regions during the training process would likely shed light on properties of the loss surface, and improved optimization methods. A picture of a network's regions 
is shown in Figure \ref{fig convex polytope}.

\subsubsection{Empirically Counting Transitions}

\begin{figure}
\centering
\begin{tabular}{cc}
\hspace*{-1cm}
\adjincludegraphics[width=0.6\linewidth]{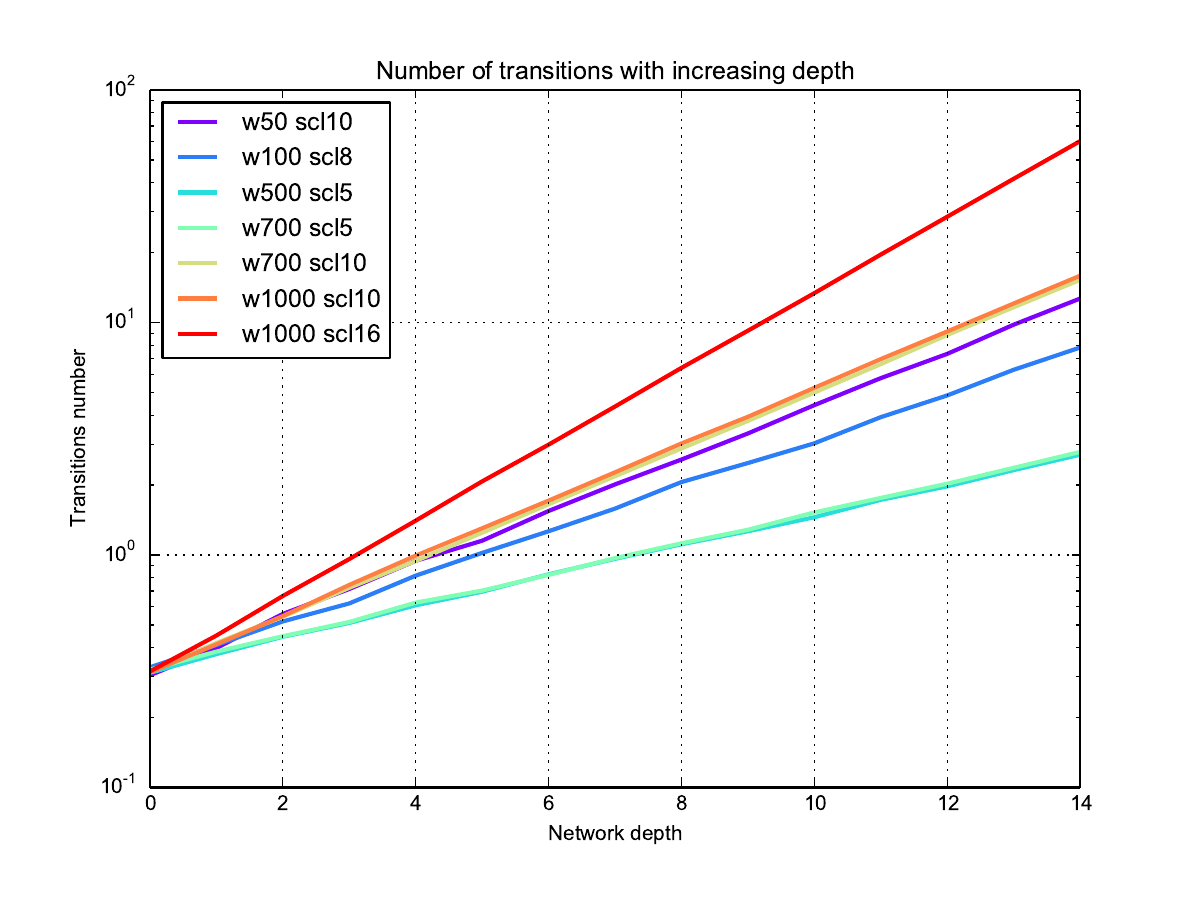} &
\hspace*{-1cm}
\adjincludegraphics[width=0.6\linewidth]{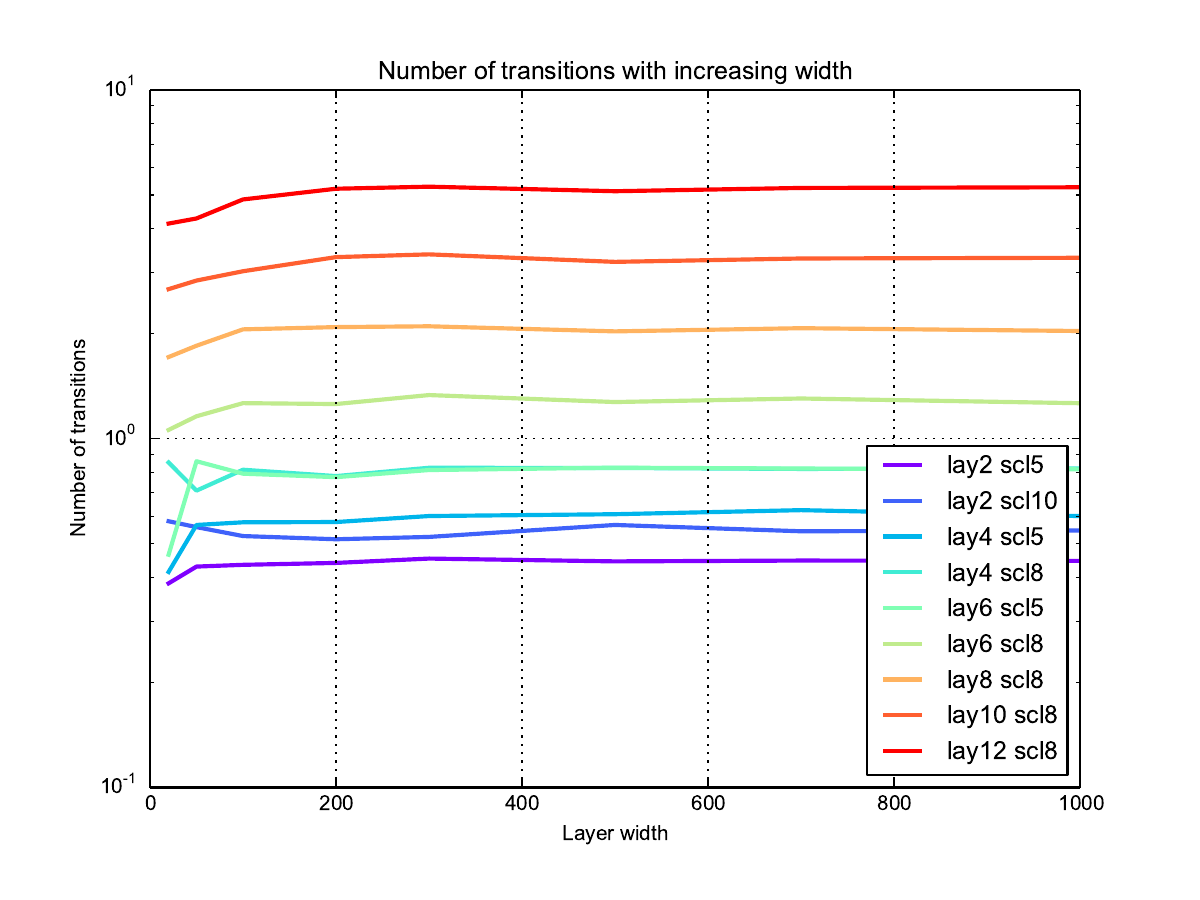}
\end{tabular}
\caption{ \small The number of transitions seen for fully connected networks of different widths, depths and initialization scales, with a circular trajectory between MNIST datapoints. The number of transitions grows exponentially with the depth of the architecture, as seen in (left). The same rate of growth is not seen with increasing architecture width, plotted in (right). There is a surprising dependence on the scale of initialization, explained in \ref{subsec_traj_length}.
}
\label{fig_MNIST_tanh_transitions}
\end{figure}

We empirically tested the growth of the number of activations and transitions as we varied $x$ along $x(t)$ to understand their behavior. We found that for bounded non linearities, especially tanh and hard-tanh, not only do we observe exponential growth with depth (as hinted at by the upper bound) but that the \textit{scale} of parameter initialization also affects the observations (Figure \ref{fig_MNIST_tanh_transitions}). We also experimented with sweeping the \textit{weights} $W$ of a layer through a trajectory $W(t)$, and counting the different labellings output by the network. This `dichotomies' measure is discussed further in the Appendix, and also exhibits the same growth properties, Figure \ref{fig patterns depth and width}.

\subsection{Trajectory Length} \label{subsec_traj_length}

\begin{figure}
\centering
\adjincludegraphics[width=0.9\linewidth]{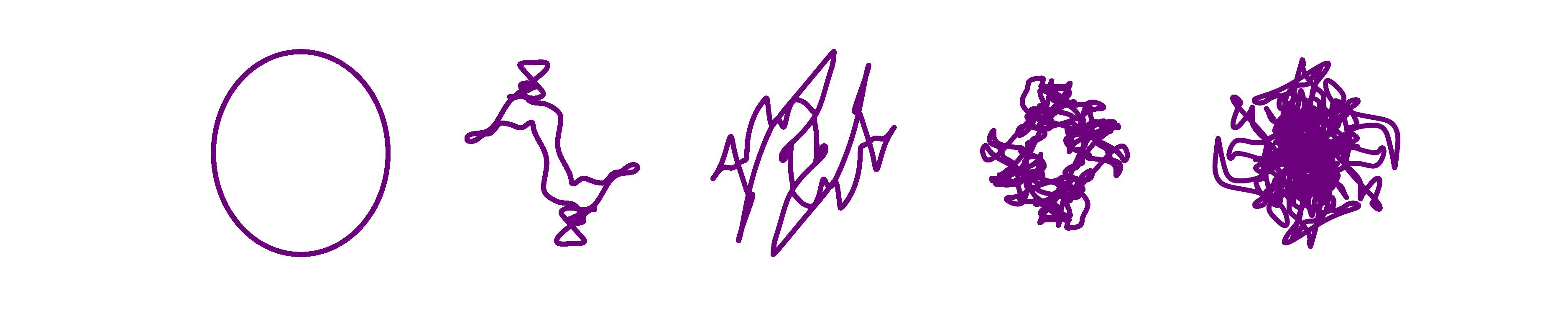}
\caption{ \small Picture showing a trajectory increasing with the depth of a network. We start off with a circular trajectory (left most pane), and feed it through a fully connected tanh network with width $100$. Pane second from left shows the image of the circular trajectory (projected down to two dimensions) after being transformed by the first hidden layer. Subsequent panes show projections of the latent image of the circular trajectory after being transformed by more hidden layers. The final pane shows the the trajectory after being transformed by all the hidden layers.
}
\label{fig_traj_length_increase}
\end{figure}

In fact, there turns out to be a reason for the exponential growth with depth, and the sensitivity to initialization scale. Returning to our definition of trajectory, we can define an immediately related quantity, \textit{trajectory length}

\textbf{Definition:} Given a trajectory, $x(t)$, we define its length, $l(x(t))$, to be the standard \textit{arc length}:
\[ l(x(t)) = \int_t \norm{\frac{d x(t)}{ d t} }dt \]
Intuitively, the arc length breaks $x(t)$ up into infinitesimal intervals and sums together the Euclidean length of these intervals.

If we let $A_{(n,k)}$ denote, as before, fully connected networks with $n$ hidden layers each of width $k$, and initializing with weights $\sim \mathcal{N}(0, \sigma_w^2/k)$ (accounting for input scaling as typical), and biases $\sim \mathcal{N}(0, \sigma_b^2)$, we find that:

\begin{theorem}\label{thm_traj_growth_bias}
\emph{Bound on Growth of Trajectory Length}
\label{thm_lb_perturbation_bias}
Let $F_A(x', W)$ be a ReLU or hard tanh random neural network and $x(t)$ a one dimensional trajectory with $x(t + \delta)$ having a non trival perpendicular component to $x(t)$ for all $t, \delta$ (i.e, not a line). Then defining $z^{(d)}(x(t)) = z^{(d)}(t)$ to be the image of the trajectory in layer $d$ of the network, we have
\begin{itemize}
    \item[(a)] \[   \mbold{E}\left[l(z^{(d)}(t))\right] \geq  O\left(\frac{\sigma_w \sqrt{k}}{\sqrt{k + 1}} \right)^d l(x(t)) \]
    for ReLUs
    
    \item[(b)] \[ \hspace{-0.5cm} \mbold{E}\left[l(z^{(d)}(t))\right] \geq  O\left( \frac{\sigma_w \sqrt{k}}{ \sqrt{ \sigma_w^2 + \sigma_b^2 + k \sqrt{\sigma_w^2 + \sigma_b^2}}} \right)^d l(x(t)) \] for hard tanh
\end{itemize}

\end{theorem}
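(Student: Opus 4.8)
The plan is to track how the arc length $l(z^{(d)}(t)) = \int \norm{dz^{(d)}(t)/dt}\,dt$ transforms across a single layer and then iterate $d$ times. Writing $h^{(d)} = W^{(d)} z^{(d-1)} + b^{(d)}$ and $z^{(d)} = \phi(h^{(d)})$, the chain rule gives the velocity recursion $\tfrac{dz^{(d)}}{dt} = D^{(d)} W^{(d)} \tfrac{dz^{(d-1)}}{dt}$, where $D^{(d)}$ is the diagonal matrix of activation derivatives $\phi'(h^{(d)}_i)$ --- equal to $\mathbf{1}[h^{(d)}_i>0]$ for ReLU and $\mathbf{1}[|h^{(d)}_i|<1]$ for hard tanh, i.e. an indicator of which neurons lie in a linear region. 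Since the arc length can only increase when we retain any single orthogonal component, it suffices to lower bound the growth of the velocity component perpendicular to the current hidden state; I would therefore set up a recursion for $\mbold{E}[\norm{\perpart{(dz^{(d)}/dt)}}]$ and integrate over $t$ only at the very end.

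The key structural step is to decompose, at each layer, the incoming velocity $dz^{(d-1)}/dt$ into its parallel and perpendicular parts relative to $z^{(d-1)}$, and to split each weight row correspondingly. The pre-activation $h^{(d)}$ --- and hence the activation pattern encoded in $D^{(d)}$ --- depends on $W^{(d)}$ only through its action on $z^{(d-1)}$. By the rotational invariance of the i.i.d. Gaussian weights, the image $W^{(d)}\,\perpart{(dz^{(d-1)}/dt)}$ of the perpendicular velocity is an isotropic Gaussian vector that is statistically independent of that activation pattern. This independence is what makes the expectation tractable: conditioned on the pattern, the surviving velocity $D^{(d)} W^{(d)} \perpart{(dz^{(d-1)}/dt)}$ is an isotropic Gaussian supported on the active coordinates, so its expected squared norm is just the number of active neurons times the per-coordinate weight variance $\sigma_w^2/k$ times $\norm{\perpart{(dz^{(d-1)}/dt)}}^2$.

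From here the calculation splits along the two activation types. For ReLU with mean-zero symmetric pre-activations, roughly half the neurons are active; for hard tanh the fraction of neurons in the linear region $(-1,1)$ is governed by the variance of $h^{(d)}_i$, which in turn depends on $\sigma_w$, $\sigma_b$, and the scale $\norm{z^{(d-1)}}$ of the hidden representation. I would therefore also track the second moment $\mbold{E}[\norm{z^{(d)}}^2]$ (whose layerwise recursion has a fixed point set by $\sigma_w,\sigma_b$), feed it into a Gaussian integral for the active fraction, and combine with the $\sigma_w^2/k$ scaling. Projecting the resulting isotropic vector off the one-dimensional $z^{(d)}$ direction costs a factor of the form $(k_{\text{active}}-1)/k_{\text{active}}$, which is the origin of the $k/(k+1)$-type correction in the stated rates. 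Assembling these pieces yields a per-layer multiplicative factor equal to $\tfrac{\sigma_w\sqrt k}{\sqrt{k+1}}$ for ReLU and the hard-tanh expression quoted in the theorem.

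Two technical points will need care, and I expect the second to be the main obstacle. First, the recursion naturally produces second moments $\mbold{E}[\norm{\cdot}^2]$, whereas the arc length requires the first moment $\mbold{E}[\norm{\cdot}]$; I would close this gap using Jensen together with concentration of the norm in high dimension (large $k$), which is exactly why the bound is stated with an $O(\cdot)$ and holds only in expectation. Second --- and this is the crux --- one must justify the conditional-independence and isotropy claims rigorously once the parallel and perpendicular parts of the weights are disentangled, and then carry the \emph{coupled} recursion for hard tanh, in which the fraction of in-region neurons, the hidden-state norm $\norm{z^{(d)}}$, and the perpendicular velocity norm all evolve together; controlling this coupled system, rather than a single scalar recursion, is where the real work lies. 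Once the one-layer lower bound on $\mbold{E}[\norm{\perpart{(dz^{(d)}/dt)}}]$ is established, iterating over the $d$ layers and using $\norm{dz^{(d)}/dt}\ge \norm{\perpart{(dz^{(d)}/dt)}}$ inside the arc-length integral immediately gives the claimed exponential lower bound.
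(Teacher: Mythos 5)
Your proposal follows essentially the same skeleton as the paper's proof: a per-layer recursion on the expected norm of the velocity component perpendicular to the current hidden state, a parallel/perpendicular decomposition of the Gaussian weight matrix relative to $z^{(d)}$, independence of the two pieces by rotational invariance (the paper's Independence of Projections lemma) so that conditioning on the activation pattern leaves the image of the perpendicular velocity an isotropic Gaussian on the active coordinates, an active fraction $p=1/2$ for ReLU and a Gaussian-integral bound on $p$ for hard tanh, and finally iteration over layers and integration over $t$ using $\norm{\delta z^{(d)}} \geq \norm{\perpart{\delta z}^{(d)}}$. Two execution details differ, and one of them as stated would fail. Your moment conversion points the wrong way: Jensen gives $\mbold{E}\left[\norm{X}\right] \leq \left(\mbold{E}\left[\norm{X}^2\right]\right)^{1/2}$, so it cannot lower bound the arc-length integrand from a second-moment recursion, and large-$k$ concentration does not cover the regime where the random active set $\mathcal{A}$ is small. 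The paper avoids this entirely by computing the first moment exactly: conditioned on the pattern, the norm is chi-distributed, $\mbold{E}\left[\norm{\cdot}\right] = \sigma \sqrt{2}\,\Gamma(|\mathcal{A}|/2)/\Gamma((|\mathcal{A}|-1)/2)$, lower bounded for all $|\mathcal{A}| \geq 2$ via an extension of Gautschi's inequality, and the average over $|\mathcal{A}| \sim \text{Binomial}(k,p)$ is handled by Jensen applied to $1/\sqrt{x}$ (where convexity does point the right way), which is exactly the origin of the $\sqrt{kp}/\sqrt{1+(k-1)p}$ factor. Since you did establish the exact conditional Gaussian law, your argument is repairable, but it should route through the chi mean rather than Jensen-plus-concentration.

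Second, the ``crux'' you anticipate for hard tanh --- a coupled recursion among the active fraction, $\mbold{E}\left[\norm{z^{(d)}}^2\right]$, and the perpendicular velocity --- does not arise in the paper and is unnecessary: hard tanh is bounded, so $\norm{z^{(d)}}^2 \leq k$ deterministically, each pre-activation has variance at most $\sigma_w^2 + \sigma_b^2$, and hence $p \geq 1/\sqrt{2\pi(\sigma_w^2+\sigma_b^2)}$ with no tracking of the hidden-state norm; this cap is precisely where the $\sqrt{\sigma_w^2+\sigma_b^2}$ terms in part (b) come from. One step you glossed over also deserves care: to make the recursion close on perpendicular components, the paper additionally splits the column space of $W^{(d)}$ with respect to the unsaturated hidden state $\tilde{z}^{(d+1)}$, proves the pointwise claim $\norm{\perpart{\delta z}^{(d+1)}} \geq \norm{{}^{\perp}W^{(d)} \delta z^{(d)} \mathbbm{1}_{\mathcal{A}}}$, and disposes of the parallel-velocity cross term via the translation lemma $\mbold{E}\left[\norm{X - \mu}\right] \geq \mbold{E}\left[\norm{X}\right]$ for centered $X$; your $(k-1)/k$-type projection factor plays the same role but needs this justification, since the direction being projected out ($z^{(d+1)}$ versus $\tilde{z}^{(d+1)}$) is itself correlated with the conditioning.
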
 

That is, $l(x(t)$ grows \textit{exponentially} with the depth of the network, but the width only appears as a base (of the exponent). This bound is in fact \textit{tight} in the limits of large $\sigma_w$ and $k$. 

\begin{figure}
\centering
\adjincludegraphics[width=0.7\linewidth]{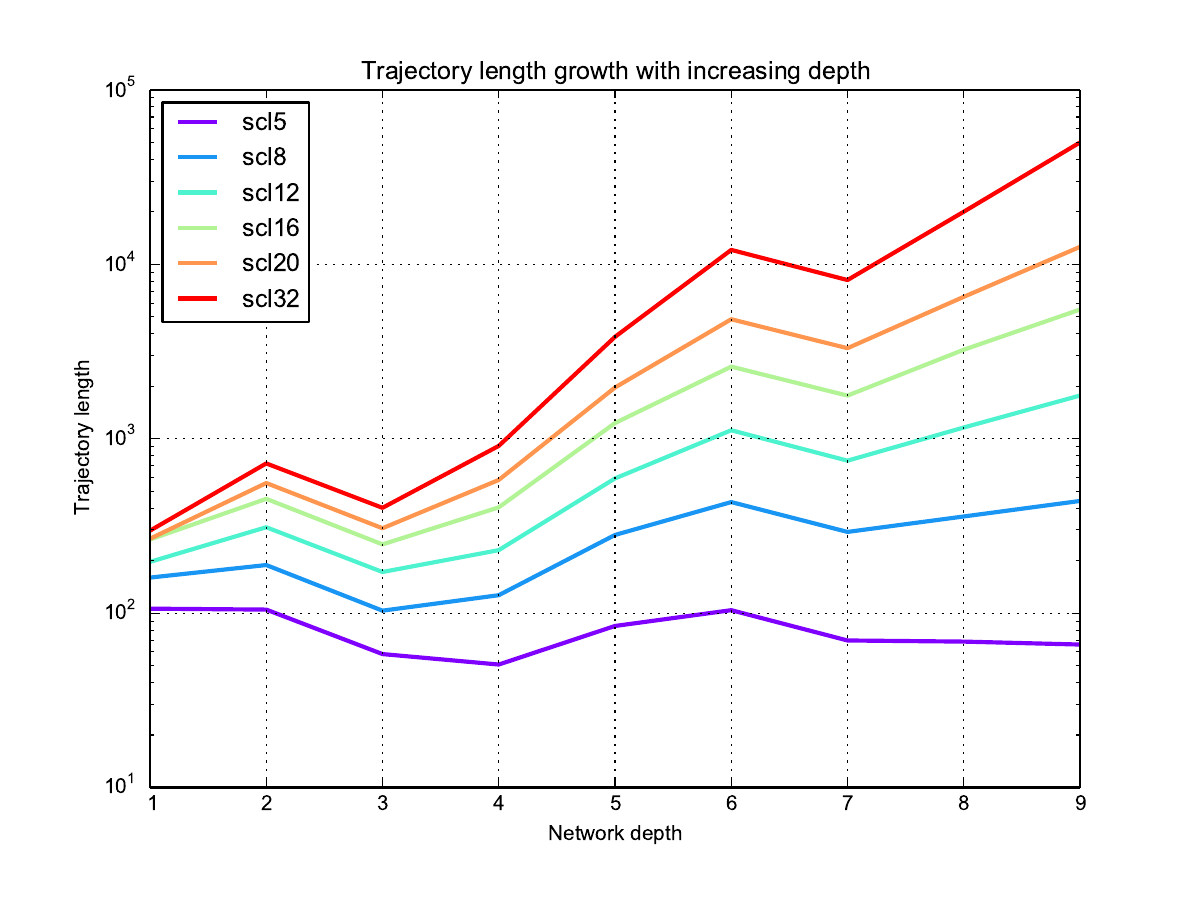}
\caption{\small We look at trajectory growth with different initialization scales as a trajectory is propagated through a convolutional architecture for CIFAR-10, with ReLU activations. The analysis of Theorem \ref{thm_traj_growth_bias} was for fully connected networks, but we see that trajectory growth holds (albeit with slightly higher scales) for convolutional architectures also. Note that the decrease in trajectory length, seen in layers $3$ and $7$ is expected, as those layers are pooling layers.
}
\label{CIFAR10_traj_length}
\end{figure}

A schematic image depicting this can be seen in Figure \ref{fig_traj_length_increase} and the proof can be found in the Appendix. A rough outline is as follows: we look at the expected growth of the difference between a point $z^{(d)}(t)$ on the curve and a small perturbation $z^{(d)}(t + d t)$, from layer $d$ to layer $d+1$. Denoting this quantity $\norm{\delta z^{(d)}(t)}$, we derive a recurrence relating $\norm{\delta z^{(d+1)}(t)}$ and $\norm{\delta z^{(d)}(t)}$ which can be composed to give the desired growth rate.

The analysis is complicated by the statistical dependence on the image of the input $z^{(d+1)}(t)$. So we instead form a recursion by looking at the component of the difference perpendicular to the image of the input in that layer, i.e. $\norm{\perpart{\delta z^{(d+1)}}(t)}$, which results in the condition on $x(t)$ in the statement.

In Figures \ref{CIFAR10_traj_length}, \ref{MNIST_traj_length}, we see the growth of an input trajectory for ReLU  networks on CIFAR-10 and MNIST. The CIFAR-10 network is  convolutional but we observe that these layers also result in similar rates of trajectory length increases to the fully connected layers. We also see, as would be expected, that pooling layers act to \textit{reduce} the trajectory length. We discuss upper bounds in the Appendix.

\begin{figure}[h]
\centering
\adjincludegraphics[width=0.6\linewidth]{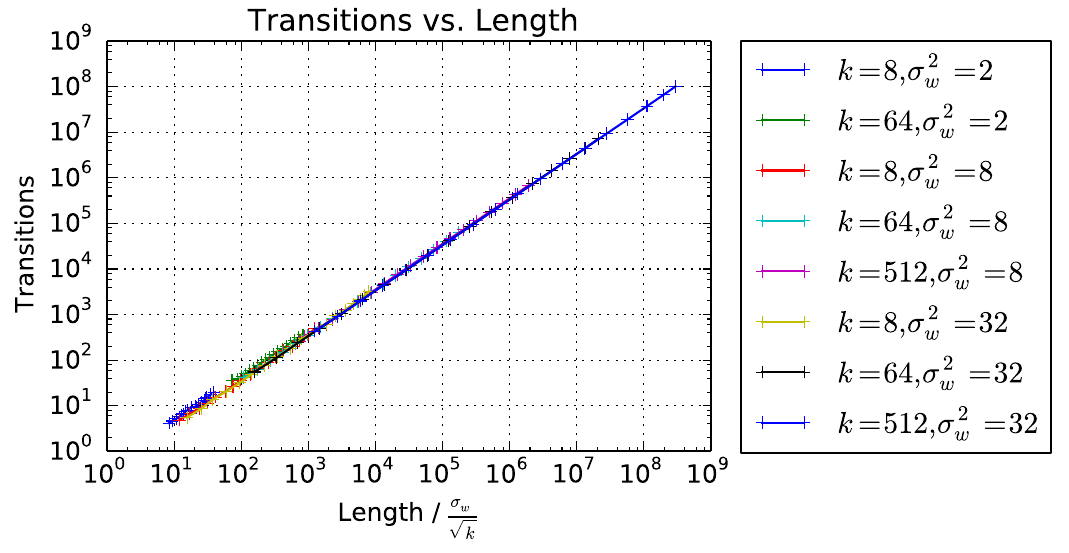}
% \adjincludegraphics[width=0.45\linewidth,trim={5in 0 5in 0}]{}
\caption{
\small The number of transitions is linear in trajectory length.
Here we compare the empirical number of transitions to the length of the trajectory,
for different depths of a hard-tanh network. 
We repeat this comparison for a variety of network architectures, with different network width $k$ 
and weight variance $\sigma^2_w$.
\label{fig linear transitions}
}
\end{figure}

For the hard tanh case (and more generally any bounded non-linearity), we can formally prove the relation of trajectory length and transitions under an assumption: assume that while we sweep $x(t)$ all neurons are saturated unless transitioning saturation endpoints, which happens very rapidly. (This is the case for e.g. large initialization scales). Then we have:

\begin{theorem}\label{thm_large_weight_limit}
\emph{Transitions proportional to trajectory length}
Let $F_{A_{n,k}}$ be a hard tanh network with $n$ hidden layers each of width $k$. And let
\[ g(k, \sigma_w, \sigma_b, n) = O\left(\frac{\sqrt{k}}{\sqrt{1 + \frac{\sigma_b^2}{\sigma_w^2}}}\right)^n \]
Then $\mathcal{T}(F_{A_{n,k}}(x(t); W) = O(g(k, \sigma_w, \sigma_b, n))$ for $W$ initialized with weight and bias scales $\sigma_w, \sigma_b$.
\end{theorem}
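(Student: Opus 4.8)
The plan is to exploit the saturation assumption to convert the purely combinatorial transition count into the geometric quantity controlled by Theorem \ref{thm_traj_growth_bias}. Under that assumption, at almost every $t$ each hidden unit in layer $d$ outputs exactly $+1$ or $-1$, so the image $z^{(d)}(t)$ lives on the vertices of the hypercube $\{-1,+1\}^k$ and changes only at the (rapid, generically non-simultaneous) instants when some unit transitions. When a single unit transitions, its coordinate sweeps from one saturation value to the other while every other coordinate stays pinned, so the curve moves along a single axis and accrues arc length exactly equal to the coordinate displacement, namely $2$. Summing over all such events gives the key identity $l(z^{(d)}(t)) = 2\,\mathcal{T}_d$, where $\mathcal{T}_d$ is the number of transitions of layer-$d$ units along the trajectory. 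The crucial feature is that this ``density'' of one transition per two units of length is independent of the weight scale.

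First I would make this identity precise: show that the arc length contributed during each brief transition equals the coordinate displacement up to a vanishing correction (the passage through the linear region is rapid and the remaining coordinates are frozen), and argue that for generic $W$ and generic $x(t)$ no two units transition at the same $t$, so the contributions simply add. This yields $\mathcal{T}_d = \tfrac12\, l(z^{(d)}(t))$ in the saturated limit.

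Next I would invoke Theorem \ref{thm_traj_growth_bias}(b) in the regime where it is tight (large $\sigma_w$ and $k$), where the per-layer growth factor $r = \sigma_w\sqrt{k}\big/\sqrt{\sigma_w^2+\sigma_b^2+k\sqrt{\sigma_w^2+\sigma_b^2}}$ reduces, as $\sigma_w$ grows, to $\sqrt{k}\big/\sqrt{1+\sigma_b^2/\sigma_w^2}$, i.e.\ exactly $g^{1/n}$, so that $\mathbb{E}\,l(z^{(d)}(t)) = \Theta(r^d)\,l(x(t))$. Since the activation pattern (and hence the linear region) changes at every transition of any unit, the total transition count is $\mathcal{T} = \sum_{d=1}^{n}\mathcal{T}_d = \tfrac12\sum_{d=1}^{n} l(z^{(d)}(t))$; because $r>1$ the geometric sum is dominated by its last term, giving $\mathcal{T} = \Theta(r^n)\,l(x(t)) = O\big(g(k,\sigma_w,\sigma_b,n)\big)$ for a fixed initial trajectory with $l(x(t)) = O(1)$.

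The main obstacle I anticipate is controlling the error in the saturation identity: away from the exact large-scale limit, units spend a nonzero fraction of $t$ in the linear region, neighboring coordinates can move together, and transitions can nearly coincide, all of which perturb the clean relation $l(z^{(d)}) = 2\,\mathcal{T}_d$. Making the statement rigorous therefore requires either taking the limit $\sigma_w \to \infty$ honestly or bounding these corrections and showing they do not change the order of growth. This is also precisely where the tightness of Theorem \ref{thm_traj_growth_bias} is genuinely needed, since only in that regime does the per-layer factor collapse to $g^{1/n}$ and does the geometric summation close.
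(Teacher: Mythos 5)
Your route is genuinely different from the paper's. You convert the transition count into a geometric quantity via the saturated-regime identity $\mathcal{T}_d \approx \tfrac{1}{2}\, l(z^{(d)}(t))$ and then import Theorem \ref{thm_traj_growth_bias}; the paper's proof never touches trajectory length. Instead it runs a self-contained combinatorial recursion on transition counts: it calls $\node{d+1}{1}$ \emph{sensitive} to $\node{d}{i}$ if a flip of $\node{d}{i}$ induces a flip of $\node{d+1}{1}$, uses the sufficient condition $|W_{i1}| \geq |\sum_{j \neq i} W_{j1}|$, computes the sensitivity probability as $1 - \tfrac{2}{\pi}\arctan(\sqrt{k}) = \Theta(1/\sqrt{k})$ from the Cauchy law of a ratio of centered Gaussians, and concludes $\mbold{E}[T^{(d+1)}] \geq \sqrt{k}\,\mbold{E}[T^{(d)}]$, with $\sqrt{k}$ becoming $\sqrt{k}/\sqrt{1+\sigma_b^2/\sigma_w^2}$ once $Y \sim \mathcal{N}(0,\sigma_w^2+\sigma_b^2)$ for nonzero bias. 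Your identity itself is sound under the stated assumption (each transition is a single-coordinate sweep of displacement $2$ with all other coordinates pinned, and the paper invokes the same genericity you need when it argues that neurons sensitive to a rapid upstream flip transition at distinct times), and it formalizes exactly the remark after the theorem that $g$ is the large-$\sigma_w$ limit of the Theorem \ref{thm_traj_growth_bias} rate. So your lower-bound half --- $\mbold{E}[\mathcal{T}]$ at least of order $r^n\, l(x(t))$ with $r \to g^{1/n}$ as $\sigma_w \to \infty$ --- is a legitimate alternative derivation of what the paper actually proves; note that despite the $O(\cdot)$ in the statement, the paper's own proof establishes only this lower-bound recursion.

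The genuine gap is the other direction. You assert $\mbold{E}[l(z^{(d)})] = \Theta(r^d)$ ``in the regime where Theorem \ref{thm_traj_growth_bias} is tight,'' but tightness is asserted in the paper, not proved: the only upper bound on hard-tanh trajectory growth given in the appendix comes from replacing the nonlinearity by the identity map, yielding a per-layer factor $\sigma_w\sqrt{(k+1)/k}$. That factor does not collapse to $g^{1/n} = \sqrt{k}/\sqrt{1+\sigma_b^2/\sigma_w^2}$; in the large-$\sigma_w$ regime where your saturation assumption holds it is of order $\sigma_w$, which dwarfs $\sqrt{k}$ once $\sigma_w \gg \sqrt{k}$. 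Hence the two-sided length estimate you lean on, and with it your geometric-sum conclusion $\mathcal{T} = O(g)$, does not follow from any available result --- and this failure occurs precisely in the regime where your argument operates. If you want to close it, the natural fix is not a length upper bound but the converse sensitivity estimate in the paper's own style: a flip of $\node{d}{i}$ can flip $\node{d+1}{1}$ only if $2|W_{i1}|$ exceeds the distance of the preactivation (scale $\sqrt{\sigma_w^2+\sigma_b^2}$) from its saturation threshold, which by the same ratio-of-Gaussians computation happens with probability $O\big(1/\sqrt{k(1+\sigma_b^2/\sigma_w^2)}\big)$, giving a matching upper recursion on $\mbold{E}[T^{(d)}]$ directly. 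A smaller mismatch: you take $\mathcal{T} = \sum_d \mathcal{T}_d$, whereas the paper's $\mathcal{T}$ counts output-layer transitions; since $r > 1$ the sum is dominated by its last term, so this only costs a constant, but it should be said.
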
 

Note that the expression for $ g(k, \sigma_w, \sigma_b, n)$ is exactly the expression given by Theorem \ref{thm_lb_perturbation_bias} when $\sigma_w$ is very large and dominates $\sigma_b$. We can also verify this experimentally in settings where the simpilfying assumption does not hold, as in Figure \ref{fig linear transitions}.

\section{Insights from Network Expressivity} \label{experiments}
Here we explore the insights gained from applying our measurements of expressivity, particularly trajectory length, to understand network performance. We examine the connection of expressivity and stability, and inspired by this, propose a new method of regularization, \textit{trajectory regularization} that offers the same advantages as the more computationally intensive batch normalization.
\subsection{Expressivity and Network Stability}

The analysis of network expressivity offers interesting takeaways related to the parameter and functional stability of a network. From the proof of Theorem \ref{thm_lb_perturbation_bias}, we saw that a perturbation to the input would grow exponentially in the depth of the network. It is easy to see that this analysis is not limited to the input layer, but can be applied to \textit{any} layer. In this form, it would say

\begin{center}
A perturbation at a layer grows exponentially in the \textit{remaining depth} after that layer.
\end{center}
\begin{figure}
\centering
\adjincludegraphics[width=0.5\linewidth]{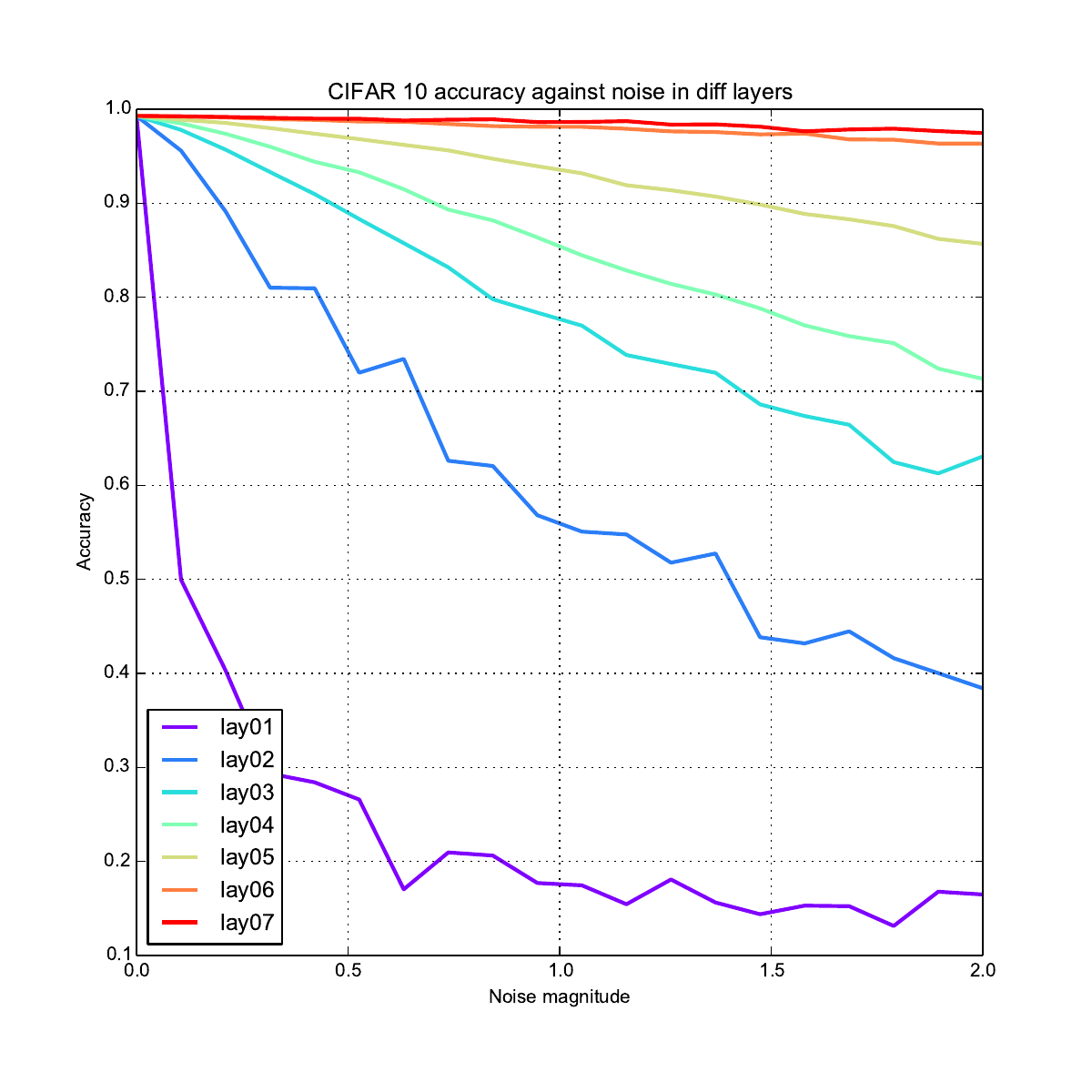}
\caption{\small
 We then pick a single layer of a conv net trained to high accuracy on CIFAR10, and add noise to the layer weights of increasing magnitudes, testing the network accuracy as we do so. We find that the initial (lower) layers of the network are \textit{least} robust to noise -- as the figure shows, adding noise of $0.25$ magnitude to the first layer results in a $0.7$ drop in accuracy, while the same amount of noise added to the fifth layer barely results in a $0.02$ drop in accuracy. This pattern is seen for many different initialization scales, even for a (typical) scaling of $\sigma_w^2 = 2$, used in the experiment.
\label{fig_CIFAR_noise}
}
\end{figure}

This means that perturbations to weights in lower layers should be more costly than perturbations in the upper layers, due to exponentially increasing magnitude of noise, and result in a much larger drop of accuracy. Figure \ref{fig_CIFAR_noise}, in which we train a conv network on CIFAR-10 and add noise of varying magnitudes to exactly one layer, shows exactly this. 

We also find that the converse (in some sense) holds: after initializing a network, we trained a single layer at different depths in the network and found monotonically increasing performance as layers lower in the network were trained. This is shown in Figure \ref{mnist_ffn} and  Figure \ref{cifar_ffn} in the Appendix.

\begin{figure}[h]
\centering
\adjincludegraphics[width=0.75\linewidth]{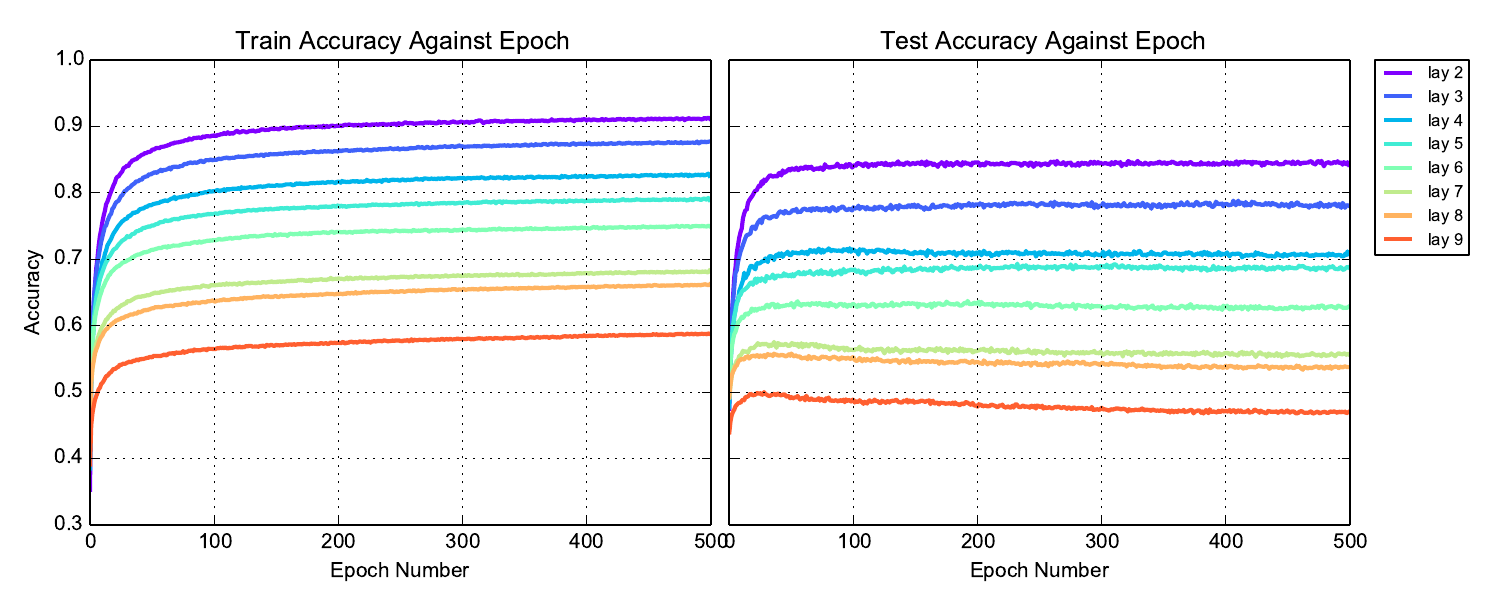}
\caption{
\small
Demonstration of expressive power of remaining depth on MNIST.
Here we plot train and test accuracy achieved by training exactly one layer of a fully connected neural net on MNIST. The different lines are generated by varying the hidden layer chosen to train. All other layers are kept frozen after random initialization. We see that training lower hidden layers leads to better performance. The networks had width $k=100$, weight variance $\sigma_w^2 = 1$, and hard-tanh nonlinearities. Note that we only train from the second hidden layer (weights $W^{(1)}$) onwards, so that the number of parameters trained remains fixed.
\label{mnist_ffn}
}
\end{figure}

\subsection{Trajectory Length and Regularization: The Effect of Batch Normalization}
Expressivity measures, especially trajectory length, can also be used to better understand the effect of regularization. One regularization technique that has been extremely successful for training neural networks is Batch Normalization \cite{ioffe2015batch}.

\begin{figure}[h]
\centering
\adjincludegraphics[width=0.6\linewidth]{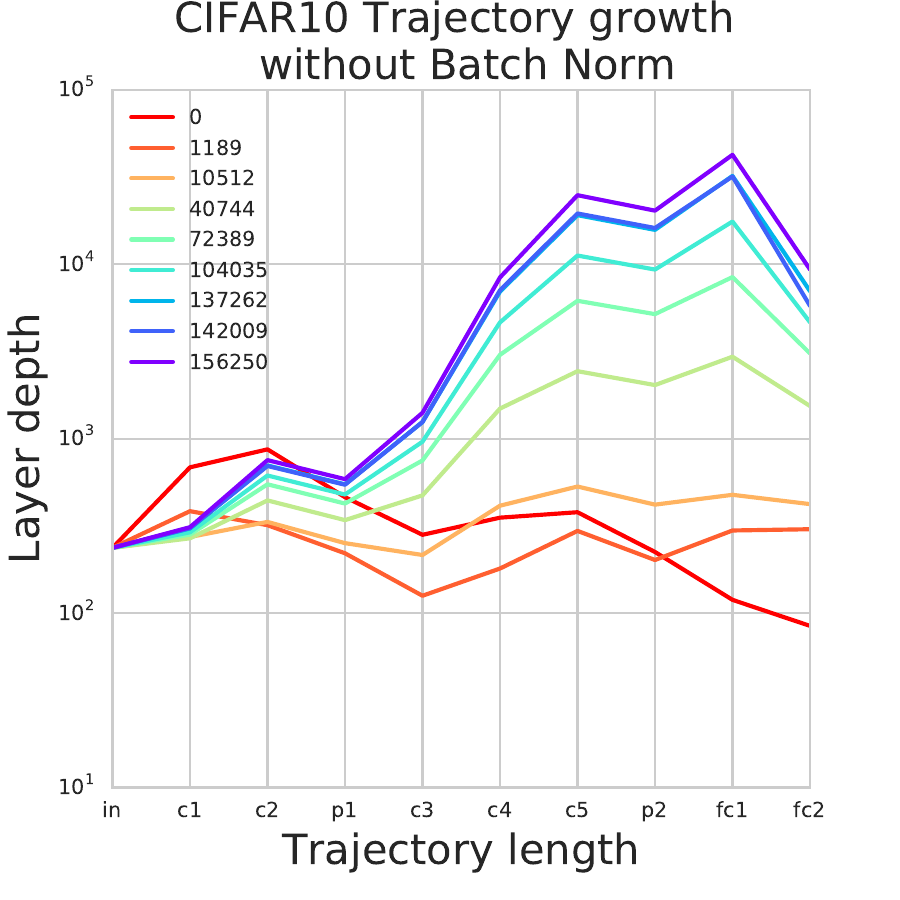}
\caption{\small Training increases trajectory length even for typical ($\sigma_w^2 = 2$) initialization values of $\sigma_w$. Here we propagate a circular trajectory joining two CIFAR10 datapoints through a conv net without batch norm, and look at how trajectory length changes through training. We see that training causes trajectory length to increase exponentially with depth (exceptions only being the pooling layers and the final fc layer, which halves the number of neurons.) Note that at Step $0$, the network is \textit{not} in the exponential growth regime. We observe (discussed in Figure \ref{fig_CIFAR_weight_norms}) that even networks that aren't initialized in the exponential growth regime can be pushed there through training.
\label{CIFAR10_traj_nobn}
}
\end{figure}

By taking measures of trajectories during training we find that without batch norm, trajectory length tends to increase during training, as shown in Figures \ref{CIFAR10_traj_nobn} and Figure \ref{mnist_traj_trans_sigma_3} in the Appendix. In these experiments, two networks were initialized with $\sigma_w^2 = 2$ and trained to high test accuracy on CIFAR10 and MNIST. We see that in both cases, trajectory length increases as training progresses.

A surprising observation is $\sigma_w^2 = 2$ is not in the exponential growth increase regime at initialization for the CIFAR10 architecture (Figure \ref{CIFAR10_traj_nobn} at Step $0$.). But note that even with a smaller weight initialization, weight norms increase during training, shown in Figure \ref{fig_CIFAR_weight_norms}, pushing typically initialized networks into the exponential growth regime.

\begin{figure}[h]
\centering
\adjincludegraphics[width=0.6\linewidth]{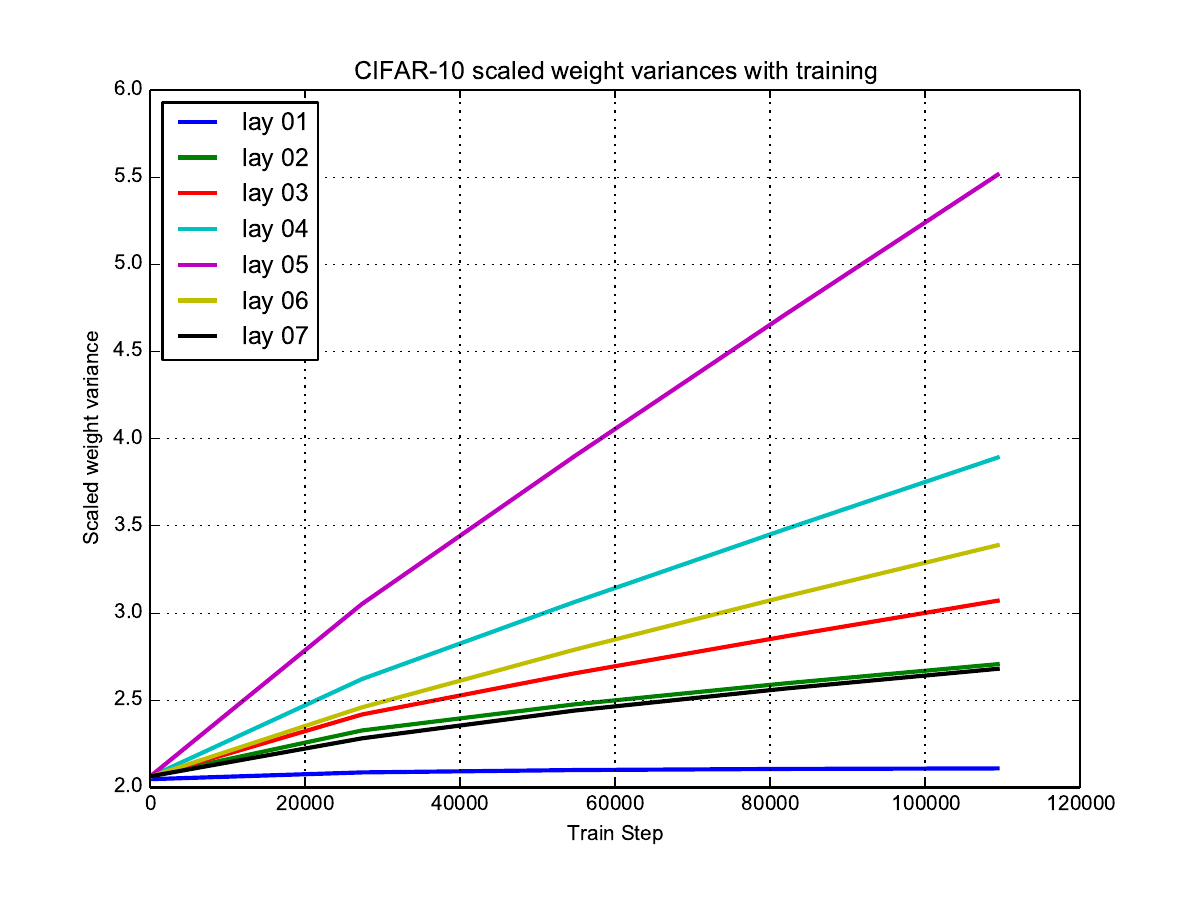}
\caption{ \small
This figure shows how the weight scaling of a CIFAR10 network evolves during training. The network was initialized with $\sigma_w^2 = 2$, which increases across all layers during training.
\label{fig_CIFAR_weight_norms}
}
\end{figure}

While the initial growth of trajectory length enables greater functional expressivity, large trajectory growth in the learnt representation results in an unstable representation, witnessed in Figure \ref{fig_CIFAR_noise}.  In Figure \ref{fig_CIFAR_bn} we train another conv net on CIFAR10, but this time with batch normalization. We see that the batch norm layers reduce trajectory length, helping stability. 

\begin{figure}[h]
\centering
\adjincludegraphics[width=0.7\linewidth]{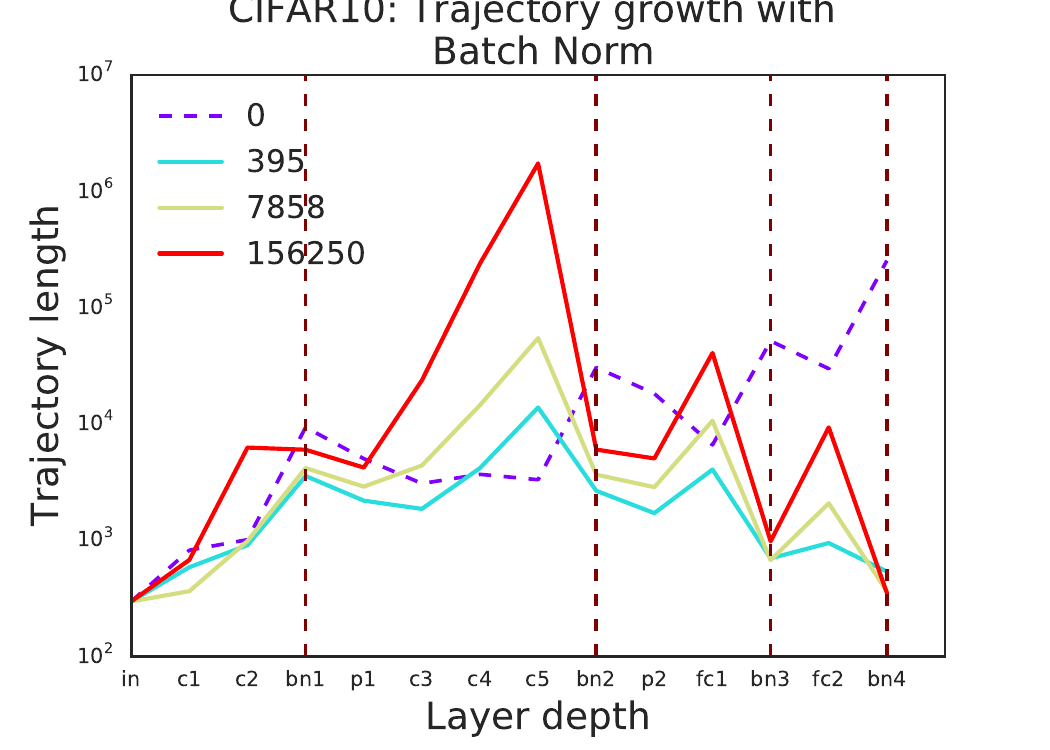}
\caption{Growth of circular trajectory between two datapoints with batch norm layers for a conv net on CIFAR10. The network was initialized as typical, with $\sigma_w^2 = 2$. Note that the batch norm layers in Step $0$ are poorly behaved due to division by a close to $0$ variance. But after just a few hundred gradient steps and continuing onwards, we see the batch norm layers (dotted lines) reduce trajectory length, stabilising the representation without sacrificing expressivity.
\label{fig_CIFAR_bn}
}
\end{figure}

\subsection{Trajectory Regularization}
\label{sec_traj_reg}
Motivated by the fact that batch normalization decreases trajectory length and hence helps stability and generalization, we consider directly regularizing on trajectory length: we replace every batch norm layer used in the conv net in Figure \ref{fig_CIFAR_bn} with a \textit{trajectory regularization layer}. This layer adds to the loss $\lambda(\text{current length}/\text{orig length})$, and then scales the outgoing activations by $\lambda$, where $\lambda$ is a parameter to be learnt. In implementation, we typically scale the additional loss above with a constant ($0.01$) to reduce magnitude in comparison to classification loss. Our results, Figure \ref{CIFAR_traj_reg} show that both trajectory regularization and batch norm perform comparably, and considerably better than not using batch norm. One advantage of using Trajectory Regularization is that we don't require different computations to be performed for train and test, enabling more efficient implementation.

\begin{figure}[h]
\centering
\adjincludegraphics[width=0.7\linewidth]{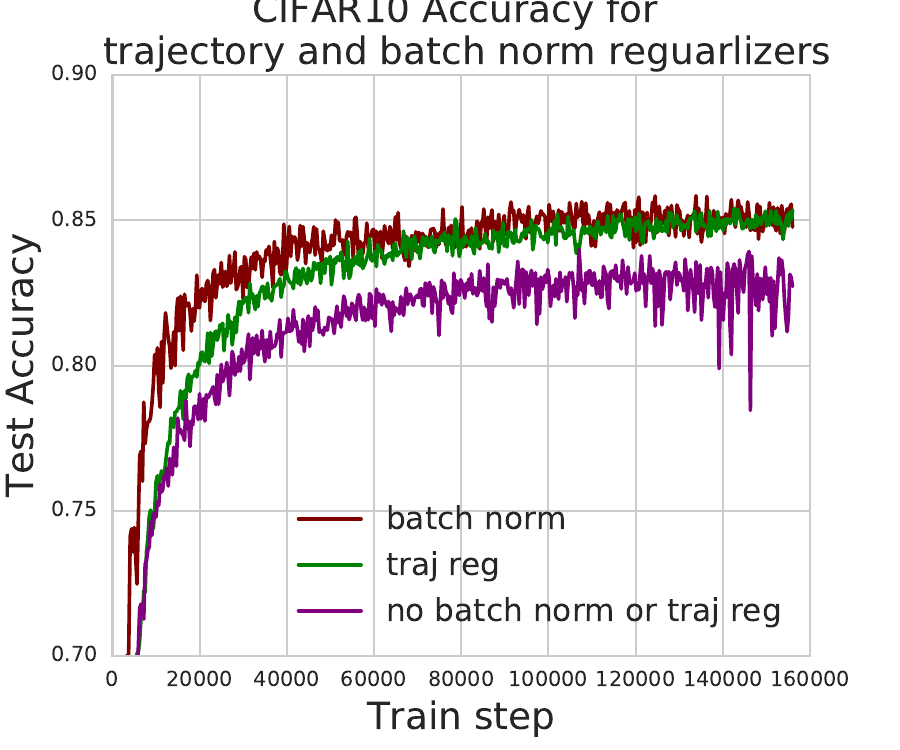}
\caption{ \small
We replace each batch norm layer of the CIFAR10 conv net with a \textit{trajectory regularization} layer, described in Section \ref{sec_traj_reg}. During training trajectory length is easily calculated as a piecewise linear trajectory between adjacent datapoints in the minibatch. We see that trajectory regularization achieves the same performance as batch norm, albeit with slightly more train time. However, as trajectory regularization behaves the same during train and test time, it is simpler and more efficient to implement.
\label{CIFAR_traj_reg}
}
\end{figure}

\section{Discussion}
Characterizing the expressiveness of neural networks, and understanding how expressiveness varies with parameters of the architecture, has been a challenging problem due to the difficulty in identifying meaningful notions of expressivity and in linking their analysis to implications for these networks in practice. In this paper we have presented an interrelated set of expressivity measures; we have shown tight exponential bounds on the growth of these measures in the depth of the networks, and we have offered a unifying view of the analysis through the notion of {\em trajectory length}. Our analysis of trajectories provides insights for the performance of trained networks as well, suggesting that networks in practice may be more sensitive to small perturbations in weights at lower layers. We also used this to explore the empirical success of batch norm, and developed a new regularization method -- trajectory regularization.

This work raises many interesting directions for future work. At a general level, continuing the theme of `principled deep understanding', it would be interesting to link measures of expressivity to other properties of neural network performance. There is also a natural connection between adversarial examples, \cite{goodfellow2014explaining}, and trajectory length: adversarial perturbations are only a small distance away in input space, but result in a large change in classification (the output layer). Understanding how trajectories between the original input and an adversarial perturbation grow might provide insights into this phenomenon. Another direction, partially explored in this paper, is regularizing based on trajectory length. A very simple version of this was presented, but further performance gains might be achieved through more sophisticated use of this method.

\section*{Acknowledgements}

We thank Samy Bengio, Ian Goodfellow, Laurent Dinh, and Quoc Le for extremely helpful discussion. 

% \mcom{END EDITS}

% \bibliographystyle{plainnat}
% \newpage
\setlength{\bibsep}{0pt plus 0.2ex}

\bibliographystyle{unsrtnat}

\bibliography{deep_expressivity}

\clearpage

\onecolumn
\appendix

\normalsize

\part*{Appendix}

Here we include the full proofs from sections in the paper.

\section{Proofs and additional results from Section \ref{subsec_hyperplanes}}

\paragraph{Proof of Theorem \ref{thm_carve_inp_space}}

\begin{proof}
We show inductively that $F_A(x;W)$ partitions the input space into convex polytopes via hyperplanes. Consider the image of the input space under the first hidden layer. Each neuron $\node{1}{i}$ defines hyperplane(s) on the input space: letting $W^{(0)}_i$ be the $i$th row of $W^{(0)}$, $b^{(0)}_i$ the bias, we have the hyperplane $W^{(0)}_i x + b_i = 0$ for a ReLU and hyperplanes $W^{(0)}_i x + b_i = \pm 1$ for a hard-tanh. Considering all such hyperplanes over neurons in the first layer, we get a hyperplane arrangement in the input space, each polytope corresponding to a specific activation pattern in the first hidden layer. 

Now, assume we have partitioned our input space into convex polytopes with hyperplanes from layers $\leq d - 1$. Consider $\node{d}{i}$ and a specific polytope $R_i$. Then the activation pattern on layers $\leq d - 1$ is constant on $R_i$, and so the input to $\node{d}{i}$ on $R_i$ is a linear function of the inputs $\sum_j \lambda_j x_j + b$ and some constant term, comprising of the bias and the output of saturated units. Setting this expression to zero (for ReLUs) or to $\pm 1$ (for hard-tanh) again gives a hyperplane equation, but this time, the equation is only valid in $R_i$ (as we get a different linear function of the inputs in a different region.) So the defined hyperplane(s) either partition $R_i$ (if they intersect $R_i$) or the output pattern of $\node{d}{i}$ is also constant on $R_i$. The theorem then follows.
\end{proof}

This implies that any one dimensional trajectory $x(t)$, that does not `double back' on itself (i.e. reenter a polytope it has previously passed through), will not repeat activation patterns. In particular, after seeing a transition (crossing a hyperplane to a different region in input space) we will never return to the region we left. A simple example of such a trajectory is a straight line:
\begin{corollary}
\label{corr_affine}
\emph{Transitions and Output Patterns in an Affine Trajectory}
For any affine one dimensional trajectory $x(t) = x_0 + t(x_1 - x_0)$ input into a neural network $F_W$, we partition $\mbold{R} \ni t$ into intervals every time a neuron transitions. Every interval has a \textit{unique} network activation pattern on $F_W$.
\end{corollary}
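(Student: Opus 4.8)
The plan is to reduce the statement to the convex-polytope partition established in Theorem~\ref{thm_carve_inp_space}, and then exploit the fact that an affine trajectory is a straight line, which meets any convex set in a connected piece. The one extra ingredient I need beyond the theorem as stated is that each \emph{complete} activation pattern occupies a single convex region (not merely that the partition consists of convex cells), and this is what makes repetition of patterns along a line impossible.

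First I would record the geometric fact underlying Theorem~\ref{thm_carve_inp_space}: the set of inputs realizing a \emph{fixed} full activation pattern is convex. This follows by unwinding the inductive construction in that proof. Fixing the first-layer pattern cuts out an intersection of half-spaces; conditioned on that, the preactivation of each neuron in layer $d$ is a genuine linear function of $x$ on the region carved out by layers $\leq d-1$, so fixing its linear region (on/off for ReLU, or one of the three hard-tanh regions) imposes further half-space constraints that are valid there. The region associated to a complete activation pattern is therefore an intersection of half-spaces, hence convex, and when nonempty a single connected polytope. In particular, two distinct polytopes in the partition carry distinct activation patterns.

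Next I would use that $x(t) = x_0 + t(x_1 - x_0)$ is affine in $t$. For any convex region $P$ in input space, the preimage $\{t \in \mbold{R} : x(t) \in P\}$ is the preimage of a convex set under an affine map, hence convex, i.e.\ an interval of $\mbold{R}$. The corollary then follows by contradiction: by construction, partitioning $\mbold{R} \ni t$ at transition points yields exactly the maximal subintervals on which the activation pattern is constant, and each such interval maps into a single polytope. Suppose two distinct intervals $I_1$ and $I_2$ shared the same activation pattern; by the first step they would map into the same convex region $P$, so $\{t : x(t) \in P\}$ would contain $I_1 \cup I_2$ and, being an interval, also every $t$ between them. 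Then the activation pattern would be constant across that entire range, contradicting the transition separating $I_1$ from $I_2$. Hence no pattern repeats.

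I expect the only step requiring care to be the first one: verifying that the locus of a single complete activation pattern is convex, rather than merely that the overall subdivision is into convex cells. The subtlety is that the hyperplanes defining deeper neurons bend across earlier regions (as emphasized in Figure~\ref{fig convex polytope}), so one must argue that \emph{within} the region fixed by a chosen pattern the relevant constraints really are honest half-spaces. Once that convexity is secured, the ``line meets a convex set in an interval'' argument closes the proof immediately.
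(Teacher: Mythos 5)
Your proof is correct and takes essentially the same route as the paper, which derives the corollary directly from Theorem \ref{thm_carve_inp_space} by remarking that a straight line never re-enters a convex polytope it has left, so activation patterns cannot repeat. Your extra step --- verifying that the locus of a fixed \emph{complete} activation pattern is a single convex region, obtained as an intersection of half-spaces by unwinding the induction --- is precisely the detail the paper leaves implicit, and you are right that it is the one point needing care.
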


Generalizing from a one dimensional trajectory, we can ask how many regions are achieved over the entire input -- i.e. how many distinct activation patterns are seen? We first prove a bound on the number of regions formed by $k$ hyperplanes in $\mbold{R}^m$ (in a purely elementary fashion, unlike the proof presented in \citep{stanleyhyperplanes})

\begin{theorem}
\label{prop-num-regions}
\emph{Upper Bound on Regions in a Hyperplane Arrangement} Suppose we have $k$ hyperplanes in $\mbold{R}^m$ - i.e. $k$ equations of form $\alpha_i x = \beta_i$. for $\alpha_i \in \mbold{R}^m$, $\beta_i \in \mbold{R}$. Let the number of regions (connected open sets bounded on some sides by the hyperplanes) be $r(k,m)$. Then
\[ r(k,m) \leq \sum_{i=0}^m \binom{k}{i} \]
\end{theorem}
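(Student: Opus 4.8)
The plan is to prove the bound by induction, via the sharper intermediate claim that the region count satisfies the recurrence
\[ r(k,m) \leq r(k-1,m) + r(k-1,m-1), \]
and then to verify that the claimed binomial sum $S(k,m) := \sum_{i=0}^m \binom{k}{i}$ obeys the \emph{same} recurrence with matching boundary values. First I would record the base cases: with no hyperplanes the whole space is a single region, so $r(0,m) = 1$, and in dimension zero the space is a single point, so $r(k,0) = 1$; both agree with $S(0,m)$ and $S(k,0)$, which each equal $1$. These two boundaries ($k=0$ for all $m$, and $m=0$ for all $k$) are exactly what the induction below will need.

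The heart of the argument is the geometric recurrence, obtained by adding the hyperplanes one at a time. Suppose the first $k-1$ hyperplanes already cut $\mbold{R}^m$ into at most $r(k-1,m)$ regions, and we now introduce a $k$-th hyperplane $H$. Every newly created region arises from $H$ slicing an existing region into two, so the number of \emph{additional} regions equals the number of existing regions that $H$ passes through; this, in turn, equals the number of pieces into which $H$ is subdivided by its intersections with the previous $k-1$ hyperplanes. Now $H$ is itself an $(m-1)$-dimensional affine space, and each intersection $H \cap H_i$ is an affine subspace of $H$ of dimension $m-2$, i.e.\ a hyperplane \emph{within} $H$. There are at most $k-1$ such hyperplanes inside $H$ (identified with $\mbold{R}^{m-1}$), so by the definition of $r$ they cut $H$ into at most $r(k-1,m-1)$ pieces, giving the recurrence above. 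The inequality (rather than equality) absorbs degenerate configurations — hyperplanes $H_i$ parallel to $H$, or several $H_i$ meeting $H$ in a common subspace — which only reduce the number of pieces. This counting step, in which the number of new regions is correctly identified with the number of pieces of $H$ and the induced arrangement inside $H$ is recognized as an arrangement of at most $k-1$ hyperplanes in one lower dimension, is the main obstacle and the place where care is needed to avoid over- or double-counting.

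It then remains to check that $S(k,m)$ satisfies the same recurrence, which follows directly from Pascal's rule $\binom{k}{i} = \binom{k-1}{i} + \binom{k-1}{i-1}$: splitting $S(k,m)$ termwise and reindexing the shifted sum yields $S(k,m) = S(k-1,m) + S(k-1,m-1)$. Finally I would close the induction on $k$: for $k,m \geq 1$,
\[ r(k,m) \leq r(k-1,m) + r(k-1,m-1) \leq S(k-1,m) + S(k-1,m-1) = S(k,m), \]
where the middle inequality is the induction hypothesis (smaller $k$), and the base cases $k=0$ and $m=0$ were handled above. This gives $r(k,m) \leq \sum_{i=0}^m \binom{k}{i}$, as required.
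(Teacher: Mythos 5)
Your proof is correct and follows essentially the same route as the paper's: the deletion/restriction recurrence $r(k,m) \leq r(k-1,m) + r(k-1,m-1)$, obtained by viewing the new regions created by a hyperplane $H$ as the pieces of the induced $(k-1)$-hyperplane arrangement inside $H \cong \mbold{R}^{m-1}$, followed by induction using Pascal's identity. If anything, your version is slightly more careful than the paper's, since you state the recurrence as an inequality and explicitly note that degenerate configurations (parallel or coincident intersections) only reduce the count, whereas the paper asserts it as an equality.
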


\paragraph{Proof of Theorem \ref{prop-num-regions}}

\begin{proof}
Let the hyperplane arrangement be denoted $\mathcal{H}$, and let $H \in \mathcal{H}$ be one specific hyperplane. Then the number of regions in $\mathcal{H}$ is precisely the number of regions in $\mathcal{H} - H$ plus the number of regions in $\mathcal{H} \cap H$. (This follows from the fact that $H$ subdivides into two regions exactly all of the regions in $\mathcal{H} \cap H$, and does not affect any of the other regions.) 

In particular, we have the recursive formula
\[ r(k, m) = r(k -1, m) + r(k - 1, m - 1) \]

We now induct on $k + m$ to assert the claim. The base cases of $r(1,0) = r(0,1) = 1$ are trivial, and assuming the claim for  $ \leq k + m - 1$ as the induction hypothesis, we have
\begin{align*}
r(k - 1, m) + r(k - 1, m - 1) & \leq \sum_{i=0}^m \binom{k-1}{i} + \sum_{i=0}^{m-1} \binom{k-1}{i} \\
& \leq \binom{k-1}{0} + \sum_{i=0}^{d-1} \binom{k-1}{i} + \binom{k-1}{i+1} \\
& \leq \binom{k}{0} + \sum_{i=0}^{m - 1} \binom{k}{i + 1}
\end{align*}
where the last equality follows by the well known identity 
\[ \binom{a}{b} + \binom{a}{b+1} = \binom{a+1}{b+1} \]

This concludes the proof.
\end{proof}

With this result, we can easily prove Theorem \ref{thm act count} as follows:

\paragraph{Proof of Theorem \ref{thm act count}}

\begin{proof}
First consider the ReLU case. Each neuron has one hyperplane associated with it, and so by Theorem \ref{prop-num-regions}, the first hidden layer divides up the inputs space into $r(k,m)$ regions, with $r(k,m) \leq O(k^m)$. 

Now consider the second hidden layer. For every region in the first hidden layer, there is a different activation pattern in the first layer, and so (as described in the proof of Theorem \ref{thm_carve_inp_space}) a different hyperplane arrangement of $k$ hyperplanes in an $m$ dimensional space, contributing at most $r(k,m)$ regions.

In particular, the total number of regions in input space as a result of the first and second hidden layers is $\leq r(k,m)*r(k,m) \leq O(k^2m)$. Continuing in this way for each of the $n$ hidden layers gives the $O(k^mn)$ bound. 

A very similar method works for hard tanh, but here each neuron produces two hyperplanes, resulting in a bound of $O((2k)^{mn})$.

\end{proof}

\section{Proofs and additional results from Section \ref{subsec_traj_length}}

\paragraph{Proof of Theorem \ref{thm_traj_growth_bias}}

\subsection{Notation and Preliminary Results}

\textit{Difference of points on trajectory} Given $x(t) = x, x(t + d t) = x + \delta x$ in the trajectory, let $\delta z^{(d)} = z^{(d)}(x + \delta x) - z^{(d)}(x)$ 

\textit{Parallel and Perpendicular Components:} Given vectors $x, y$, we can write $y = \perpart{y} + \parpart{y} $ where $\perpart{y}$ is the component of $y$ perpendicular to $x$, and $\parpart{y}$ is the component parallel to $x$. (Strictly speaking, these components should also have a subscript $x$, but we suppress it as the direction with respect to which parallel and perpendicular components are being taken will be explicitly stated.)

This notation can also be used with a matrix $W$, see Lemma \ref{lemma_matrix_decomp}.

Before stating and proving the main theorem, we need a few preliminary results.

\begin{lemma}
\label{lemma_matrix_decomp}
\emph{Matrix Decomposition}
Let $x, y \in \mbold{R}^k$ be fixed non-zero vectors, and let $W$ be a (full rank) matrix. Then, we can write
\[ W = \parpar{W} + \parperp{W} + \perpar{W} + \perperp{W}  \]
such that
\begin{align*}
    \parperp{W} x = 0 && \perperp{W} x = 0 \\
    y^T \perpar{W} = 0 && y^T \perperp{W}  = 0
\end{align*}
i.e. the row space of $W$ is decomposed to perpendicular and parallel components with respect to $x$ (subscript on right), and the column space is decomposed to perpendicular and parallel components of $y$ (superscript on left).
\end{lemma}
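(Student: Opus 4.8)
The plan is to build the four pieces by sandwiching $W$ between a pair of orthogonal projectors: one acting on the right (on the domain, to split the row space relative to $x$) and one on the left (on the codomain, to split the column space relative to $y$). The key to reading the statement correctly is that the superscript $\parallel/\perp$ label refers to the column-space (codomain) decomposition with respect to $y$, while the subscript label refers to the row-space (domain) decomposition with respect to $x$.

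First I would introduce the rank-one orthogonal projectors
\[ P_x = \frac{x x^T}{\norm{x}^2}, \qquad Q_x = I - P_x, \qquad P_y = \frac{y y^T}{\norm{y}^2}, \qquad Q_y = I - P_y, \]
which are well defined since $x, y \neq 0$. These satisfy $P_x + Q_x = P_y + Q_y = I$, and I will repeatedly use the two annihilation identities $Q_x x = 0$ and $y^T Q_y = (Q_y y)^T = 0$ (the latter using symmetry of $Q_y$). Intuitively, right-multiplication by $P_x$ (resp.\ $Q_x$) retains only the part of each row of $W$ along $x$ (resp.\ orthogonal to $x$), and left-multiplication by $P_y$ (resp.\ $Q_y$) retains only the part of each column along $y$ (resp.\ orthogonal to $y$).

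Next I would simply define the four components by choosing one projector on each side,
\[ \parpar{W} = P_y W P_x, \quad \parperp{W} = P_y W Q_x, \quad \perpar{W} = Q_y W P_x, \quad \perperp{W} = Q_y W Q_x, \]
so that factoring the sum gives $\parpar{W} + \parperp{W} + \perpar{W} + \perperp{W} = (P_y + Q_y)\, W\, (P_x + Q_x) = W$, which establishes the decomposition. I would then verify the four stated conditions, each of which follows in one line: for the subscript-$\perp$ terms, $\parperp{W} x = P_y W Q_x x = 0$ and $\perperp{W} x = Q_y W Q_x x = 0$ because $Q_x x = 0$; for the superscript-$\perp$ terms, $y^T \perpar{W} = y^T Q_y W P_x = 0$ and $y^T \perperp{W} = y^T Q_y W Q_x = 0$ because $y^T Q_y = 0$.

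There is essentially no hard step here: the construction is forced once the superscript/subscript convention is pinned down, and the only real care needed is matching each $\parallel/\perp$ label to the correct side. I would also remark that the full-rank hypothesis plays no role in this lemma — the decomposition holds for an arbitrary matrix $W$ — so the assumption is harmless and is presumably invoked in a later result.
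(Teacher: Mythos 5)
Your proof is correct: the four components $P_y W P_x$, $P_y W Q_x$, $Q_y W P_x$, $Q_y W Q_x$ sum to $W$ by the telescoping identity $(P_y + Q_y)W(P_x + Q_x) = W$, and each annihilation condition follows in one line from $Q_x x = 0$ and $y^T Q_y = 0$; you have also matched the superscript/subscript convention to the correct side. The paper takes a different route to the same objects: it chooses rotations $U, V$ with $Vx = (\norm{x}, 0, \dots, 0)^T$ and $Uy = (\norm{y}, 0, \dots, 0)^T$, sets $\tilde{W} = UWV^T$, splits $\tilde{W}$ by \emph{partitioning its entries} (the $(1,1)$ entry, the rest of the first row, the rest of the first column, and the remaining block), verifies the conditions in that frame, and rotates back. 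The two constructions yield the identical decomposition --- e.g.\ $U^T\,\parpar{\tilde{W}}\,V = P_y W P_x$ --- so yours is a coordinate-free restatement that is shorter and arguably cleaner for this lemma in isolation. What the paper's formulation buys is downstream leverage: the rotated-frame, entry-partition picture is reused directly in the subsequent lemmas, where the fact that the components occupy \emph{disjoint entries} of an iid Gaussian matrix in the rotated frame makes the independence of $\parpart{W}$ and $\perpart{W}$ (Lemma \ref{lemma_indep_proj}) immediate, and lets the norm computation in Lemma \ref{norm_projections} read off that $\perperp{\tilde{W}}\perpart{\tilde{x}}$ has exactly $k-1$ nonzero Gaussian entries. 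Your closing remark is also correct and worth making: the full-rank hypothesis is never used --- both your argument and the paper's work for arbitrary $W$.
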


\begin{proof}
Let $V, U$ be rotations such that $Vx = (\norm{x}, 0... ,0)^T $ and $Uy = (\norm{y}, 0 ... 0)^T$. Now let $\tilde{W} = UWV^T$, and let $\tilde{W} = \parpar{\tilde{W}} + \parperp{\tilde{W}} + \perpar{\tilde{W}} + \perperp{\tilde{W}}$, with $\parpar{\tilde{W}}$ having non-zero term exactly $\tilde{W}_{11}$, $\parperp{\tilde{W}}$ having non-zero entries exactly $\tilde{W}_{1i}$ for $2 \leq i \leq k$. Finally, we let $\perpar{\tilde{W}}$ have non-zero entries exactly $\tilde{W}_{i1}$, with $ 2 \leq i \leq k$ and $\perperp{\tilde{W}}$ have the remaining entries non-zero.

If we define $\tilde{x} = Vx$ and $\tilde{y} = Uy$, then we see that
\begin{align*}
    \parperp{\tilde{W}} \tilde{x} = 0 && \perperp{\tilde{W}} \tilde{x} = 0 \\
    \tilde{y}^T \perpar{\tilde{W}} = 0 && \tilde{y}^T \perperp{\tilde{W}} = 0 
\end{align*}
as $\tilde{x}, \tilde{y}$ have only one non-zero term, which does not correspond to a non-zero term in the components of $\tilde{W}$ in the equations.

Then, defining $\parpar{W} = U^T \parpar{\tilde{W}} V$, and the other components analogously, we get equations of the form
\[\parperp{W} x = U^T \parperp{\tilde{W}} V x = U^T \parperp{\tilde{W}} \tilde{x} = 0\]
\end{proof}

\begin{obs}
Given $W, x$ as before, and considering $\parpart{W}$, $\perpart{W}$ with respect to $x$ (wlog a unit vector) we can express them directly in terms of $W$ as follows: Letting $W^{(i)}$ be the $i$th row of $W$, we have

\[ \parpart{W} = \begin{pmatrix} ((W^{(0)})^T \cdot x) x \\ \vdots \\ ((W^{(k)})^T \cdot x) x \end{pmatrix} \]
i.e. the projection of each row in the direction of $x$. And of course
\[ \perpart{W} = W - \parpart{W} \]
\end{obs}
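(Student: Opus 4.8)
The plan is to observe that the decomposition of $W$ into its row-parallel and row-perpendicular parts with respect to $x$ acts independently on each row of $W$, so the statement reduces to the elementary decomposition of a single vector into its components along and orthogonal to the unit vector $x$.

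First I would recall that for a unit vector $x$, any $v \in \mbold{R}^k$ splits uniquely as $v = \langle v, x \rangle\, x + \big(v - \langle v, x \rangle\, x\big)$, with the first summand parallel to $x$ and the second orthogonal to it. Applying this to each row $W^{(i)}$ of $W$, the parallel part of row $i$ is precisely $\big(\langle W^{(i)}, x\rangle\big)\, x^T$, which is exactly the $i$th row of the matrix displayed in the statement; this gives the formula for $\parpart{W}$, and then $\perpart{W} = W - \parpart{W}$ by definition.

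Next I would confirm that this row-wise construction agrees with the decomposition defined in Lemma \ref{lemma_matrix_decomp}. The cleanest route is to rewrite the claimed $\parpart{W}$ compactly as right-multiplication by the rank-one projector $x x^T$: since
\[ (W x x^T)_{ij} = \Big(\sum_l W_{il} x_l\Big) x_j = \langle W^{(i)}, x\rangle\, x_j, \]
the $i$th row of $W x x^T$ is $\langle W^{(i)}, x\rangle\, x^T$, matching the displayed formula. I would then verify the two defining properties of the decomposition: every row of $W x x^T$ is a scalar multiple of $x^T$, hence parallel to $x$, and
\[ \perpart{W}\, x = (W - W x x^T)\, x = W x - W x\,(x^T x) = W x - W x = 0, \]
using $x^T x = 1$. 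Since the orthogonal split of each row relative to $x$ is unique, these two properties pin down $\parpart{W}$ and $\perpart{W}$, so the row-wise formula is indeed the desired one.

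The statement carries no real obstacle; it is the standard vector projection formula applied simultaneously to every row. The only point needing a little care is to check that right-multiplication by $x x^T$ reproduces exactly the components singled out in Lemma \ref{lemma_matrix_decomp}, rather than some other plausible-looking splitting, and this is settled by verifying $\perpart{W} x = 0$ together with uniqueness of orthogonal projection.
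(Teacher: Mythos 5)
Your proof is correct. Note that the paper itself states this Observation without any explicit proof: it is presented as an immediate consequence of the construction in Lemma \ref{lemma_matrix_decomp}, where the decomposition is \emph{defined} via rotations $V$ (with $Vx = (\norm{x},0,\dots,0)^T$) by zeroing out blocks of the rotated matrix $\tilde{W}$ and rotating back. Had the paper spelled out a derivation, it would presumably conjugate through $V$: the first column of $\tilde{W}=WV^T$ has entries $\langle W^{(i)}, x\rangle$, and rotating back recovers rows $\langle W^{(i)}, x\rangle\, x^T$. You instead bypass the rotation machinery entirely, identifying $\parpart{W}$ with right-multiplication by the rank-one projector $xx^T$ and pinning it down by the two defining properties ($\perpart{W}x=0$ and rows of $\parpart{W}$ parallel to $x$) together with uniqueness of the orthogonal split of each row. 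This is more elementary and basis-free, and it has a genuine side benefit the paper leaves implicit: it shows the decomposition is well defined, i.e.\ independent of the (non-unique) choice of rotation $V$ in Lemma \ref{lemma_matrix_decomp}. What the paper's rotation-based formulation buys, by contrast, is not this Observation but the downstream independence argument of Lemma \ref{lemma_indep_proj}(a), which exploits rotational invariance of Gaussian matrices; your projector formulation instead aligns naturally with the covariance computation in Lemma \ref{lemma_indep_proj}(b). Your one point of care --- checking that $Wxx^T$ matches the components singled out in Lemma \ref{lemma_matrix_decomp} rather than some other splitting --- is exactly the right thing to verify, and your check $\perpart{W}x = Wx - Wx(x^Tx) = 0$ correctly uses the unit-norm assumption on $x$ stated in the Observation.
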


The motivation to consider such a decomposition of $W$ is for the resulting independence between different components, as shown in the following lemma.

\begin{lemma}
\emph{Independence of Projections}
\label{lemma_indep_proj}
Let $x$ be a given vector (wlog of unit norm.) If $W$ is a random matrix with $W_{ij} \sim \mathcal{N}(0, \sigma^2)$, then $\parpart{W}$ and $\perpart{W}$ with respect to $x$ are independent random variables.
\end{lemma}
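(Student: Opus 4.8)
The plan is to reduce the statement to a computation about a single row of $W$ and then leverage the mutual independence of the rows. Since the entries $W_{ij}$ are i.i.d.\ Gaussian, the rows $W^{(1)}, \dots, W^{(k)}$ are mutually independent, each distributed as $\mathcal{N}(0, \sigma^2 I)$. By the Observation preceding the lemma, the $i$th row of $\parpart{W}$ is $((W^{(i)})^T\! x)\, x$ and the $i$th row of $\perpart{W}$ is $W^{(i)} - ((W^{(i)})^T\! x)\, x$; in particular, the pair (parallel row, perpendicular row) at index $i$ is a deterministic function of $W^{(i)}$ alone. Hence these pairs are independent across $i$, and it suffices to prove the one-row statement: for a single Gaussian row $w \sim \mathcal{N}(0, \sigma^2 I)$, the parallel part $\parpart{w} := (x^T w)x$ is independent of the perpendicular part $\perpart{w} := w - (x^T w)x$.

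For this one-row claim I would argue by rotation invariance, mirroring the proof of Lemma \ref{lemma_matrix_decomp}. Let $R$ be an orthogonal matrix with $Rx = e_1$ (possible since $x$ is a unit vector), and set $\tilde w = Rw$. Because $R(\sigma^2 I)R^T = \sigma^2 I$, we have $\tilde w \sim \mathcal{N}(0, \sigma^2 I)$, so its coordinates $\tilde w_1, \dots, \tilde w_k$ are independent. Now $x^T w = (Rx)^T(Rw) = e_1^T \tilde w = \tilde w_1$, so $\parpart{w} = \tilde w_1\, R^T e_1$ depends only on $\tilde w_1$, while $\perpart{w} = R^T(\tilde w - \tilde w_1 e_1) = R^T(0, \tilde w_2, \dots, \tilde w_k)^T$ depends only on $\tilde w_2, \dots, \tilde w_k$. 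The independence of the coordinates of $\tilde w$ then yields $\parpart{w} \perp \perpart{w}$.

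Equivalently — a version I might state to avoid introducing $R$ — both $\parpart{w} = (x x^T) w$ and $\perpart{w} = (I - x x^T) w$ are linear images of the Gaussian vector $w$, hence jointly Gaussian, and their cross-covariance is $(x x^T)(\sigma^2 I)(I - x x^T) = \sigma^2(x x^T - x x^T) = 0$, using $x^T x = 1$. For jointly Gaussian vectors, vanishing cross-covariance is equivalent to independence, so again $\parpart{w} \perp \perpart{w}$.

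There is no deep obstacle here; the lemma is essentially the rotation invariance of the isotropic Gaussian. The one point needing care is the passage from per-row independence to independence of the full matrices $\parpart{W}$ and $\perpart{W}$: this uses that the rows are mutually independent together with the elementary fact that if the pairs $(A_i, B_i)$ are independent across $i$ and $A_i \perp B_i$ for each $i$, then $(A_1, \dots, A_k) \perp (B_1, \dots, B_k)$. The only other thing to keep straight is the standard criterion that uncorrelated jointly Gaussian variables are independent (used in the covariance argument), or, equivalently, the coordinate-wise independence obtained after rotating $x$ to $e_1$ (used in the first argument).
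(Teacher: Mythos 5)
Your proposal is correct and takes essentially the same approach as the paper, which in fact gives both of your arguments: its method (a) is your rotation-invariance proof (done in one shot at the matrix level via $\tilde{W} = WV^T$, where $\parpart{\tilde{W}}$ and $\perpart{\tilde{W}}$ occupy disjoint entries, rather than reducing to a single row first), and its method (b) is your joint-Gaussian zero-covariance proof. If anything, your per-row computation $(xx^T)\sigma^2(I - xx^T) = 0$ together with row independence verifies the full cross-covariance more explicitly than the paper's method (b), which only checks that $\mbold{E}[\parpart{W}\,\perpart{W}^T]$ vanishes.
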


\begin{proof}

There are two possible proof methods:
\begin{enumerate}
\item[(a)] We use the rotational invariance of random Gaussian matrices, i.e. if $W$ is a Gaussian matrix, iid entries $\mathcal{N}(0, \sigma^2)$, and $R$ is a rotation, then $RW$ is also iid Gaussian, entries $\mathcal{N}(0, \sigma^2)$. (This follows easily from affine transformation rules for multivariate Gaussians.)

Let $V$ be a rotation as in Lemma \ref{lemma_matrix_decomp}. Then $\tilde{W} = WV^T$ is also iid Gaussian, and furthermore, $\parpart{\tilde{W}}$ and $\perpart{\tilde{W}}$ partition the entries of $\tilde{W}$, so are evidently independent. But then $\parpart{W} = \parpart{\tilde{W}}V^T$ and $\perpart{W} = \perpart{\tilde{W}}V^T$ are also independent.

\item[(b)]From the observation note that $\parpart{W}$ and $\perpart{W}$ have a centered multivariate joint Gaussian distribution (both consist of linear combinations of the entries $W_{ij}$ in $W$.) So it suffices to show that $\parpart{W}$ and $\perpart{W}$ have covariance $0$. Because both are centered Gaussians, this is equivalent to showing $\mbold{E}(<\parpart{W}, \perpart{W}>) = 0$.  We have that
\[ \mbold{E}(<\parpart{W}, \perpart{W}>) = \mbold{E}( \parpart{W} \perpart{W}^T) = \mbold{E}(\parpart{W} W^T) - \mbold{E}(\parpart{W} \parpart{W}^T) \]
As any two rows of $W$ are independent, we see from the observation that $\mbold{E}(\parpart{W} W^T)$ is a diagonal matrix, with the $i$th diagonal entry just $((W^{(0)})^T \cdot x)^2$. But similarly, $\mbold{E}(\parpart{W} \parpart{W}^T)$ is also a diagonal matrix, with the same diagonal entries - so the claim follows.
\end{enumerate}
\end{proof}

In the following two lemmas, we use the rotational invariance of Gaussians as well as the chi distribution to prove results about the expected norm of a random Gaussian vector.

\begin{lemma}\label{lemma_norm}
\emph{Norm of a Gaussian vector}
Let $X \in \mbold{R}^k $ be a random Gaussian vector, with $X_i$ iid,  $\sim \mathcal{N}(0, \sigma^2)$. Then 
\[ \mbold{E} \left[ \norm{X} \right] = \sigma \sqrt{2} \frac{\Gamma((k+1)/2)}{\Gamma(k/2)} \]
\end{lemma}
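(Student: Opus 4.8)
The plan is to reduce everything to a single one-dimensional radial integral and recognize it as a Gamma-function evaluation; equivalently, to identify $\norm{X}/\sigma$ as a chi-distributed random variable with $k$ degrees of freedom and read off its mean.

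First I would write the expectation explicitly against the isotropic Gaussian density, $\mbold{E}[\norm{X}] = (2\pi\sigma^2)^{-k/2}\int_{\mbold{R}^k} \norm{x}\, e^{-\norm{x}^2/(2\sigma^2)}\, dx$. Because both the integrand and the measure depend on $x$ only through $\norm{x}$, I would pass to spherical coordinates: the angular integral contributes the surface area of the unit sphere, a constant that will cancel, leaving a ratio of radial integrals. Concretely, the normalization forces $\mbold{E}[\norm{X}]$ to equal $\left(\int_0^\infty r^{k}e^{-r^2/(2\sigma^2)}dr\right)\big/\left(\int_0^\infty r^{k-1}e^{-r^2/(2\sigma^2)}dr\right)$, where the denominator is exactly the radial normalization of the density. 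This is where the rotational invariance of the Gaussian is doing its work: it guarantees that the angular factors are identical in numerator and denominator and cancel exactly.

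Next I would evaluate the two radial integrals with the substitution $u = r^2/(2\sigma^2)$, which turns each $\int_0^\infty r^{p}e^{-r^2/(2\sigma^2)}dr$ into $\tfrac12 (2\sigma^2)^{(p+1)/2}\Gamma\!\left(\tfrac{p+1}{2}\right)$. Taking $p=k$ for the numerator and $p=k-1$ for the denominator and simplifying the ratio leaves $(2\sigma^2)^{1/2}\,\Gamma((k+1)/2)/\Gamma(k/2) = \sigma\sqrt2\,\Gamma((k+1)/2)/\Gamma(k/2)$, which is the claimed identity.

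There is no serious obstacle here: the quantity is the first moment of a chi distribution, and the computation is routine once the problem is reduced to the radial integral. The only points requiring a little care are bookkeeping the powers of $2\sigma^2$ through the substitution and confirming that the angular and normalization constants cancel exactly (which the rotational-invariance observation makes transparent). An equivalent and even shorter route would simply cite the known mean of the chi distribution with $k$ degrees of freedom, after noting that $\norm{X}/\sigma$ is chi-distributed.
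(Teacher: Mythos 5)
Your proof is correct, and in substance it converges to the paper's: the paper's entire proof is the one-line observation that $\norm{X}/\sigma$ follows a chi distribution with $k$ degrees of freedom, whose mean is $\sqrt{2}\,\Gamma((k+1)/2)/\Gamma(k/2)$, followed by rescaling by $\sigma$ --- exactly the ``equivalent and even shorter route'' you mention in your final sentence. The body of your argument is the self-contained version of that citation: passing to spherical coordinates, cancelling the angular factor and the Gaussian normalization against each other via rotational invariance, and evaluating $\int_0^\infty r^p e^{-r^2/(2\sigma^2)}\,dr = \tfrac12 (2\sigma^2)^{(p+1)/2}\Gamma\bigl(\tfrac{p+1}{2}\bigr)$ at $p=k$ and $p=k-1$ is precisely the standard derivation of the chi-distribution mean, and your bookkeeping of the powers of $2\sigma^2$ is right: the ratio leaves $(2\sigma^2)^{1/2}\,\Gamma((k+1)/2)/\Gamma(k/2) = \sigma\sqrt{2}\,\Gamma((k+1)/2)/\Gamma(k/2)$ as claimed. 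What your direct computation buys is that the lemma becomes fully elementary, with no external fact invoked; what the paper's one-line citation buys is brevity, since the chi mean is standard. There is no gap in either route.
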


\begin{proof}
We use the fact that if $Y$ is a random Gaussian, and $Y_i \sim \mathcal{N}(0, 1)$ then $\norm{Y}$ follows a chi distribution. This means that $\mbold{E}(\norm{X/\sigma}) = \sqrt{2}\Gamma((k+1)/2)/\Gamma(k/2)$, the mean of a chi distribution with $k$ degrees of freedom, and the result follows by noting that the expectation in the lemma is $\sigma$ multiplied by the above expectation.
\end{proof}

We will find it useful to bound ratios of the Gamma function (as appear in Lemma \ref{lemma_norm}) and so introduce the following inequality, from \citep{extendedgautschi} that provides an extension of Gautschi's Inequality.

\begin{theorem}\label{thm_gautschi}
\emph{An Extension of Gautschi's Inequality}
For $0 < s < 1$, we have
\[ \left( x + \frac{s}{2} \right)^{1-s} \leq \frac{\Gamma(x + 1)}{\Gamma(x + s)} \leq \left(x - \frac{1}{2} + \left(s + \frac{1}{4} \right)^{\frac{1}{2}} \right)^{1 -s} \]
\end{theorem}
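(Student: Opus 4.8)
The plan is to reduce both bounds to a single monotonicity argument. Writing $\alpha_L = s/2$ for the lower bound and $\alpha_U = \sqrt{s+1/4}-1/2$ for the upper bound, I would define, for a generic shift $\alpha>0$,
\[ f_\alpha(x) = \log\Gamma(x+1) - \log\Gamma(x+s) - (1-s)\log(x+\alpha). \]
The two claimed inequalities are exactly $f_{\alpha_L}(x)\ge 0$ and $f_{\alpha_U}(x)\le 0$ for all $x>0$, so it suffices to pin down the sign of $f_\alpha$ at these two special values. First I would record the boundary behaviour: from the standard asymptotic $\Gamma(x+1)/\Gamma(x+s) = x^{1-s}(1+O(1/x))$ (a consequence of Stirling's formula) together with $(1-s)\log(x+\alpha) = (1-s)\log x + O(1/x)$, one gets $f_\alpha(x)\to 0$ as $x\to\infty$. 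This lets me replace $f_\alpha$ by the telescoping series $f_\alpha(x) = \sum_{n\ge 0}\Delta_\alpha(x+n)$, where $\Delta_\alpha(x) = f_\alpha(x) - f_\alpha(x+1)$, so that the sign of $f_\alpha$ is controlled entirely by the sign of $\Delta_\alpha$.

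The key simplification is that $\Delta_\alpha$ is elementary. Using the functional equation $\Gamma(t+1)=t\,\Gamma(t)$ to cancel the Gamma factors, I obtain
\[ \Delta_\alpha(x) = \log\frac{x+s}{x+1} + (1-s)\log\frac{x+1+\alpha}{x+\alpha}, \]
and differentiating and combining fractions gives
\[ \Delta_\alpha'(x) = (1-s)\left[\frac{1}{(x+s)(x+1)} - \frac{1}{(x+\alpha)(x+1+\alpha)}\right]. \]
Thus the sign of $\Delta_\alpha'$ is governed by comparing the two quadratics $q_1(x)=(x+s)(x+1)$ and $q_2(x)=(x+\alpha)(x+1+\alpha)$. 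This is where the two magic values enter. For $\alpha_L=s/2$ the linear coefficients match ($1+2\alpha_L = 1+s$) while $q_1-q_2 = \tfrac{s}{4}(2-s)>0$ is a positive constant, so $\Delta_{\alpha_L}'<0$. For $\alpha_U$ one checks the algebraic coincidence $\alpha_U(1+\alpha_U)=s$, so the constant terms of $q_1,q_2$ agree while $q_1-q_2=(s-2\alpha_U)x<0$ (using $s+1<2\sqrt{s+1/4}$ for $0<s<2$), giving $\Delta_{\alpha_U}'>0$.

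Since $\Delta_\alpha(x)\to 0$ as $x\to\infty$, monotonicity then forces $\Delta_{\alpha_L}(x)>0$ and $\Delta_{\alpha_U}(x)<0$ throughout $(0,\infty)$, and the telescoping sum delivers $f_{\alpha_L}(x)\ge 0$ and $f_{\alpha_U}(x)\le 0$, i.e. the two bounds. All of these computations are routine; the one step requiring genuine care is the sign determination of $\Delta_\alpha'$, since that is precisely where the constants $s/2$ and $\sqrt{s+1/4}-1/2$ are forced — they are the unique shifts making $q_1-q_2$ single-signed (a positive constant for the lower bound, a negative linear term for the upper). I expect the main obstacle to be guessing the auxiliary function $f_\alpha$ and recognizing these two coincidences; once the telescoping reduction is set up, the remaining estimates are elementary. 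As a backup I would keep the route that writes $\log[\Gamma(x+1)/\Gamma(x+s)] = \int_s^1 \psi(x+u)\,du$ and exploits the monotonicity and concavity of the digamma $\psi$, but converting such $\psi$-bounds into the exact form $(x+\alpha)^{1-s}$ is less direct than the telescoping argument above.
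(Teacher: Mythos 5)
Your proof is correct, and the first thing to say is that the paper contains no proof of this statement at all: Theorem~\ref{thm_gautschi} is imported from the cited reference \citep{extendedgautschi} (it is Kershaw's refinement of Gautschi's inequality), so there is no in-paper argument to compare against, and you have supplied a self-contained derivation of a result the paper only quotes. Your computations check out line by line: the functional equation $\Gamma(t+1)=t\,\Gamma(t)$ gives $\Delta_\alpha(x)=\log\tfrac{x+s}{x+1}+(1-s)\log\tfrac{x+1+\alpha}{x+\alpha}$; differentiating gives $\Delta_\alpha'(x)=(1-s)\bigl[\tfrac{1}{(x+s)(x+1)}-\tfrac{1}{(x+\alpha)(x+1+\alpha)}\bigr]$; for $\alpha_L=s/2$ the difference $q_1-q_2=\tfrac{s}{4}(2-s)$ is indeed a positive constant; for $\alpha_U=\sqrt{s+1/4}-\tfrac12$ the identity $\alpha_U(1+\alpha_U)=s$ holds (with $u=\sqrt{s+1/4}$ one gets $\alpha_U+\alpha_U^2=(u-\tfrac12)+(s+\tfrac12-u)=s$), and $s+1<2\sqrt{s+1/4}$ is equivalent to $s(s-2)<0$, true on $0<s<1$; since $\Delta_\alpha\to 0$ at infinity, strict monotonicity pins its sign, and the telescoping identity $f_\alpha(x)=\sum_{n\ge 0}\Delta_\alpha(x+n)$ is justified by your Stirling computation showing $f_\alpha(x+N)\to 0$. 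Two quibbles, neither affecting correctness. First, state the domain explicitly: the argument as written needs $x>0$ (which is all the paper uses, via Lemma~\ref{norm_projections}). Second, your parenthetical that $s/2$ and $\sqrt{s+1/4}-\tfrac12$ are the \emph{unique} shifts making $q_1-q_2$ single-signed is not right: since $q_1-q_2=(s-2\alpha)x+(s-\alpha-\alpha^2)$, the difference is positive on $x>0$ for \emph{every} $0<\alpha\le s/2$ and negative for every $\alpha\ge\alpha_U$; what is true is that $s/2$ and $\alpha_U$ are the \emph{extremal} such shifts, hence the ones yielding the sharpest bounds of the form $(x+\alpha)^{1-s}$. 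Incidentally, your telescoping-plus-monotonicity scheme is essentially the classical route to Kershaw-type inequalities, so your blind reconstruction matches the method in the literature the paper leans on; your backup plan via $\log\bigl[\Gamma(x+1)/\Gamma(x+s)\bigr]=\int_s^1\psi(x+u)\,du$ would also work but, as you anticipate, does not produce the exact constants $s/2$ and $\sqrt{s+1/4}-\tfrac12$ as cleanly.
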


We now show:

\begin{lemma}\label{norm_projections}
\emph{Norm of Projections}
Let $W$ be a $k$ by $k$ random Gaussian matrix with iid entries $\sim \mathcal{N}(0, \sigma^2)$, and $x, y$ two given vectors. Partition $W$ into components as in Lemma \ref{lemma_matrix_decomp} and let $\perpart{x}$ be a nonzero vector perpendicular to $x$. Then 
\begin{enumerate}
  \item[(a)] \[\mbold{E}\left[\norm{\perperp{W}\perpart{x}} \right] = \norm{\perpart{x}}\sigma \sqrt{2}\frac{\Gamma(k/2)}{\Gamma((k-1)/2} \geq \norm{\perpart{x}}\sigma \sqrt{2} \left(\frac{k}{2} - \frac{3}{4} \right)^{1/2} \]
  \item[(b)] If ${1}_{\mathcal{A}}$ is an identity matrix with non-zeros diagonal entry $i$ iff $i \in \mathcal{A} \subset [k]$, and $|A| > 2$, then 
  \[ \mbold{E}\left[\norm{{1}_{\mathcal{A}} \perperp{W}\perpart{x}} \right] \geq \norm{\perpart{x}} \sigma \sqrt{2}\frac{\Gamma(|\mathcal{A}|/2)}{\Gamma((|\mathcal{A}|-1)/2)} \geq \norm{\perpart{x}}\sigma \sqrt{2} \left(\frac{|\mathcal{A}|}{2} - \frac{3}{4} \right)^{1/2} \]
\end{enumerate}
\end{lemma}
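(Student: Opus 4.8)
The plan is to reduce both parts to the expected norm of an isotropic Gaussian vector (Lemma~\ref{lemma_norm}) by exploiting rotational invariance, and then to apply the extended Gautschi inequality (Theorem~\ref{thm_gautschi}) with $s = \tfrac12$ to convert the resulting $\Gamma$-ratio into the stated square-root lower bound.

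For part (a), I would first pass to the rotated frame of Lemma~\ref{lemma_matrix_decomp}, writing $\perperp{W} = U^T \perperp{\tilde W} V$, where $\perperp{\tilde W}$ has nonzero entries only in the lower-right $(k-1)\times(k-1)$ block and those entries are iid $\mathcal{N}(0,\sigma^2)$. Since $U^T$ is a rotation, $\norm{\perperp{W}\perpart{x}} = \norm{\perperp{\tilde W} V \perpart{x}}$. Because $\perpart{x} \perp x$ and $V$ sends $x$ to $\norm{x}e_1$, the vector $V\perpart{x}$ has vanishing first coordinate and norm $\norm{\perpart{x}}$; hence $\perperp{\tilde W}V\perpart{x}$ is exactly a fixed vector of norm $\norm{\perpart{x}}$ hit by a $(k-1)\times(k-1)$ iid Gaussian matrix. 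Each of its $k-1$ coordinates is then an independent $\mathcal{N}(0,\sigma^2\norm{\perpart{x}}^2)$ variable, so Lemma~\ref{lemma_norm} in dimension $k-1$ gives the stated equality $\norm{\perpart{x}}\sigma\sqrt2\,\Gamma(k/2)/\Gamma((k-1)/2)$. Applying Theorem~\ref{thm_gautschi} with $x = k/2 - 1$ and $s = \tfrac12$ yields the lower bound $\norm{\perpart{x}}\sigma\sqrt2\,(k/2 - 3/4)^{1/2}$.

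For part (b), the only new ingredient is the coordinate mask $\mathbbm{1}_{\mathcal A}$, which no longer commutes with the rotation $U$, so the clean reduction of part (a) is unavailable. Instead I would record the law of $Z := \perperp{W}\perpart{x}$ directly in the original frame: by part (a) it is a centered Gaussian supported on $y^\perp$ and isotropic there, i.e. with covariance $\sigma^2\norm{\perpart{x}}^2 P_{y^\perp}$ (the degeneracy coming from $y^T\perperp{W}=0$). Writing $\hat y = y/\norm{y}$ and $\hat y_{\mathcal A} = (\hat y_i)_{i\in\mathcal A}$, the restricted vector $\mathbbm{1}_{\mathcal A} Z$ is Gaussian in $\mbold{R}^{|\mathcal A|}$ with covariance $\sigma^2\norm{\perpart{x}}^2(I_{\mathcal A} - \hat y_{\mathcal A}\hat y_{\mathcal A}^T)$, whose eigenvalues are $\sigma^2\norm{\perpart{x}}^2$ with multiplicity $|\mathcal A|-1$ and $\sigma^2\norm{\perpart{x}}^2(1 - \norm{\hat y_{\mathcal A}}^2) \ge 0$ once. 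This covariance dominates, in the positive-semidefinite order, an isotropic covariance $\sigma^2\norm{\perpart{x}}^2 P$ on the $(|\mathcal A|-1)$-dimensional eigenspace. Using that the expected norm of a centered Gaussian is monotone under the PSD order (write the larger Gaussian as the smaller plus an independent centered term and apply Jensen to the convex map $\norm{\cdot}$), I would bound $\mbold{E}[\norm{\mathbbm{1}_{\mathcal A}Z}]$ below by the expected norm of an isotropic Gaussian in dimension $|\mathcal A|-1$, which Lemma~\ref{lemma_norm} evaluates to $\norm{\perpart{x}}\sigma\sqrt2\,\Gamma(|\mathcal A|/2)/\Gamma((|\mathcal A|-1)/2)$; Theorem~\ref{thm_gautschi} with $s=\tfrac12$ again gives $\norm{\perpart{x}}\sigma\sqrt2\,(|\mathcal A|/2 - 3/4)^{1/2}$.

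The main obstacle is part (b): identifying the correct effective dimension $|\mathcal A|-1$ after masking and justifying the PSD-monotonicity step. Because $\mathbbm{1}_{\mathcal A}$ and $U$ do not commute, one cannot reuse the rotation argument of part (a); one must instead track the full covariance of $\mathbbm{1}_{\mathcal A}Z$ and control its relevant eigenvalues, the single degenerate direction being precisely the loss induced by the constraint $Z \perp y$.
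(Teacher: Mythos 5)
Your proposal is correct, and it splits into two cases relative to the paper. Part (a) is essentially the paper's own argument: rotate via the $U,V$ of Lemma \ref{lemma_matrix_decomp}, observe that $\perperp{\tilde{W}}V\perpart{x}$ has $k-1$ iid centered Gaussian coordinates, apply Lemma \ref{lemma_norm}, then Theorem \ref{thm_gautschi} with $s=\tfrac12$ (you also quietly use the correct per-coordinate variance $\sigma^2\norm{\perpart{x}}^2$, where the paper's text misstates it as $(k-1)\sigma^2\norm{\perpart{x}}^2$ --- the stated equality confirms your version). Part (b) is where you genuinely diverge. The paper asserts that masking commutes with the perpendicular decomposition, $\mathbbm{1}_{\mathcal{A}}\perperp{W} = \perperp{\mathbbm{1}_{\mathcal{A}}W}$, then re-runs the rotation argument treating $\mathbbm{1}_{\mathcal{A}}W$ as an $|\mathcal{A}|\times k$ iid Gaussian matrix, disposing of the lost dimension with a parenthetical about whether $y$ is annihilated by the projection; taken literally this commutation is shaky, since the columns of the masked matrix need not be perpendicular to the masked $y$, so the paper's step is more sketch than proof. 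You instead identify the exact law of $Z=\perperp{W}\perpart{x}$ as a centered Gaussian with covariance $\sigma^2\norm{\perpart{x}}^2(I-\hat{y}\hat{y}^T)$, read off the principal submatrix covariance $\sigma^2\norm{\perpart{x}}^2(I_{\mathcal{A}}-\hat{y}_{\mathcal{A}}\hat{y}_{\mathcal{A}}^T)$ after masking, and lower-bound via PSD-monotonicity of the expected norm (independent-sum decomposition plus Jensen, in the same spirit as the paper's Lemma \ref{lemma_norm_gaussian}) to land on an isotropic $(|\mathcal{A}|-1)$-dimensional Gaussian. What your route buys is rigor: it makes precise exactly which single direction is degraded by the constraint $y^T\perperp{W}=0$ and why the effective dimension is $|\mathcal{A}|-1$, at the modest cost of one extra (standard and correctly justified) monotonicity lemma; the paper's route is shorter but leans on an informal commutation claim that your argument renders unnecessary.
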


\begin{proof}
\begin{enumerate}
  \item[(a)] Let $U, V, \tilde{W}$ be as in Lemma \ref{lemma_matrix_decomp}. As $U, V$ are rotations, $\tilde{W}$ is also iid Gaussian. Furthermore for any fixed $W$, with $\tilde{a} = Va$, by taking inner products, and square-rooting, we see that $\norm{\tilde{W}\tilde{a}} = \norm{Wa}$. So in particular
  \[ \mbold{E}\left[\norm{\perperp{W}\perpart{x}} \right] = \mbold{E}\left[\norm{\perperp{\tilde{W}}\perpart{\tilde{x}}}
  \right] \]
  But from the definition of non-zero entries of $\perperp{\tilde{W}}$, and the form of $\perpart{\tilde{x}}$ (a zero entry in the first coordinate), it follows that $\perperp{\tilde{W}}\perpart{\tilde{x}}$ has exactly $k-1$ non zero entries, each a centered Gaussian with variance $(k -1)\sigma^2 \norm{\perpart{x}}^2$. By Lemma \ref{lemma_norm}, the expected norm is as in the statement. We then apply Theorem \ref{thm_gautschi} to get the lower bound.

  \item[(b)] First note we can view ${1}_{\mathcal{A}} \perperp{W} = \perperp{{1}_{\mathcal{A}} W}$. (Projecting down to a random (as $W$ is random) subspace of fixed size $|\mathcal{A}| = m$ and then making perpendicular commutes with making perpendicular and then projecting everything down to the subspace.) 
  
  So we can view $W$ as a random $m$ by $k$ matrix, and for $x, y$ as in Lemma \ref{lemma_matrix_decomp} (with $y$ projected down onto $m$ dimensions), we can again define $U, V$ as $k$ by $k$ and $m$ by $m$ rotation matrices respectively, and $\tilde{W} = UWV^T$, with analogous properties to Lemma 1. Now we can finish as in part (a), except that $\perperp{\tilde{W}}\tilde{x}$ may have only $m - 1$ entries, (depending on whether $y$ is annihilated by projecting down by${1}_{\mathcal{A}}$) each of variance $(k-1)\sigma^2 \norm{\perpart{x}}^2$. 
\end{enumerate}
\end{proof}

\begin{lemma}
\label{lemma_norm_gaussian}
\emph{Norm and Translation}
Let $X$ be a centered multivariate Gaussian, with diagonal covariance matrix, and $\mu$ a constant vector.
\[ \mbold{E}(\norm{X - \mu}) \geq \mbold{E}(\norm{X}) \]
\end{lemma}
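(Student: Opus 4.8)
The plan is to exploit the symmetry of a centered Gaussian, namely that $X$ and $-X$ have the same distribution, together with the triangle inequality for the norm. First I would symmetrize the left-hand side: since $-X$ is distributed identically to $X$, replacing $X$ by $-X$ leaves the expectation unchanged, so that $\mbold{E}(\norm{X - \mu}) = \mbold{E}(\norm{-X-\mu}) = \mbold{E}(\norm{X + \mu})$, where the last equality uses $\norm{-v} = \norm{v}$. Averaging the first and last expressions then yields
\[ \mbold{E}(\norm{X - \mu}) = \tfrac{1}{2}\,\mbold{E}\big(\norm{X-\mu} + \norm{X+\mu}\big). \]

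Next I would apply the triangle inequality pointwise inside the expectation. For every realization of $X$,
\[ \norm{X - \mu} + \norm{X + \mu} \geq \norm{(X-\mu) + (X+\mu)} = \norm{2X} = 2\norm{X}. \]
Taking expectations and combining with the symmetrized identity above gives $\mbold{E}(\norm{X-\mu}) \geq \mbold{E}(\norm{X})$, which is exactly the claim. Finiteness of the expectations involved is immediate, since $\mbold{E}(\norm{X}) < \infty$ for a Gaussian vector and $\norm{X \pm \mu} \leq \norm{X} + \norm{\mu}$.

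An equivalent route, which I would mention as a cross-check, is to observe that $\mu \mapsto \mbold{E}(\norm{X-\mu})$ is convex (being an expectation of the convex maps $\mu \mapsto \norm{X-\mu}$) and even (by the same symmetry $X \sim -X$), so it attains its minimum at $\mu = 0$; evaluating there gives the bound. Either way there is essentially no genuine obstacle here: the only facts in play are the subadditivity (convexity) of the norm and the symmetry of the centered distribution about the origin. It is worth noting in passing that the diagonal-covariance hypothesis is not actually used — symmetry of the centered Gaussian is all the argument requires — so the statement holds for any centered Gaussian, and indeed for any distribution symmetric about $0$.
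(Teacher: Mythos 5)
Your proof is correct and takes essentially the same route as the paper: the paper's formal argument likewise pairs $x$ with $-x$ using the symmetry $f_X(x) = f_X(-x)$ of the centered density and applies the triangle inequality $\norm{x-\mu} + \norm{-x-\mu} \geq \norm{2x}$, which is exactly your symmetrization identity written as an integral rather than an expectation. Your closing remark that the diagonal-covariance hypothesis is never used --- only symmetry about the origin --- is accurate and indeed already implicit in the paper's proof.
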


\begin{proof}
The inequality can be seen intuitively geometrically: as $X$ has diagonal covariance matrix, the contours of the pdf of $\norm{X}$ are circular centered at $0$, decreasing radially. However, the contours of the pdf of $\norm{X - \mu}$ are shifted to be centered around $\norm{\mu}$, and so shifting back $\mu$ to $0$ reduces the norm.

A more formal proof can be seen as follows: let the pdf of $X$ be $f_X(\cdot)$. Then we wish to show
\[ \int_x \norm{x - \mu}f_X(x) dx \geq \int_x \norm{x}f_X(x) dx \]
Now we can pair points $x, -x$, using the fact that $f_X(x) = f_X(-x)$ and the triangle inequality on the integrand to get
\[ \int_{|x|}  \left(\norm{x - \mu} + \norm{-x -\mu} \right) f_X(x) dx \geq \int_{|x|}  \norm{2x} f_X(x) dx  = \int_{|x|} \left(\norm{x} + \norm{-x} \right) f_X(x) dx \]

\end{proof}

\subsection{Proof of Theorem}

We use $\node{d}{i}$ to denote the $i^{th}$ neuron in hidden layer $d$. We also let $x = z^{(0)}$ be an input, $h^{(d)}$ be the hidden representation at layer $d$, and $\phi$ the non-linearity. The weights and bias are called $W^{(d)}$ and $b^{(d)}$ respectively. So we have the relations
\begin{align}
    h^{(d)} = W^{(d)} z^{(d)} + b^{(d)}, 
    && z^{(d+1)} = \phi(h^{(d)}).
\end{align}

\begin{proof}
We first prove the zero bias case. To do so, it is sufficient to prove that
\[ \mbold{E}\left[\norm{\delta z^{(d+1)}(t)}\right] \geq O\left( \left( \frac{\sqrt{\sigma k}}{\sqrt{\sigma + k}} \right)^{d+1} \right) \norm{\delta z^{(0)}(t)} \tag{**} \]
as integrating over $t$ gives us the statement of the theorem. 

For ease of notation, we will suppress the $t$ in $z^{(d)}(t)$. 

We first write
\[ W^{(d)} = \perpart{W^{(d)}} + \parpart{W^{(d)}} \]
where the division is done with respect to $z^{(d)}$. Note that this means $h^{(d+1)} = \parpart{W^{(d)}} z^{(d)}$ as the other component annihilates (maps to $0$) $z^{(d)}$. 

We can also define $\mathcal{A}_{\parpart{W^{(d)}}} = \{ i : i \in [k], |\hidden{d+1}{i}| < 1 \}$ i.e. the set of indices for which the hidden representation is not saturated. 
Letting $W_i$ denote the $i$th row of matrix $W$, we now claim that:
\[ \mbold{E}_{W^{(d)}} \left[ \norm{\delta z^{(d+1)}} \right] = \mbold{E}_{\parpart{W}^{(d)}} \mbold{E}_{\perpart{W}^{(d)}} \left[ \left( \sum_{ i \in \mathcal{A}_{\parpart{W^{(d)}}}} ((\perpart{W^{(d)}})_i \delta z^{(d)} + (\parpart{W^{(d)}})_i \delta z^{(d)})^2 \right)^{1/2} \tag{*}  \right] \]

Indeed, by Lemma \ref{lemma_indep_proj} we first split the expectation over $W^{(d)}$ into a tower of expectations over the two independent parts of $W$ to get 
\[ \mbold{E}_{W^{(d)}} \left[ \norm{\delta z^{(d+1)}} \right] = \mbold{E}_{\parpart{W}^{(d)}} \mbold{E}_{\perpart{W}^{(d)}} \left[ \norm{ \phi(W^{(d)} \delta z^{(d)})} \right]    \]

But conditioning on $\parpart{W^{(d)}}$ in the inner expectation gives us $h^{(d+1)}$ and $\mathcal{A}_{\parpart{W^{(d)}}}$, allowing us to replace the norm over $\phi(W^{(d)} \delta z^{(d)})$ with the sum in the term on the right hand side of the claim.

Till now, we have mostly focused on partitioning the matrix $W^{(d)}$. But we can also set $\delta z^{(d)} = \parpart{\delta z^{(d)}} + \perpart{\delta z^{(d)}} $ where the perpendicular and parallel are with respect to $z^{(d)}$. In fact, to get the expression in (**), we derive a recurrence as below: 

\[  \mbold{E}_{W^{(d)}} \left[ \norm{ \perpart{\delta z^{(d+1)}}} \right] \geq O\left( \frac{\sqrt{\sigma k}}{\sqrt{\sigma + k}} \right)  \mbold{E}_{W^{(d)}} \left[  \norm{\perpart{\delta z^{(d)}} } \right] \]

To get this, we first need to define $\tilde{z}^{(d+1)} = {1}_{\mathcal{A}_{\parpart{W^{(d)}}}} h^{(d+1)} $ - the latent vector $h^{(d+1)}$ with all saturated units zeroed out.

We then split the column space of $W^{(d)} = {}^{\perp}W^{(d)} + {}^{\parallel}W^{(d)}$, where the split is with respect to $\tilde{z}^{(d+1)}$. Letting $\perpart{\delta z}^{(d+1)}$ be the part perpendicular to $z^{(d+1)}$, and $\mathcal{A}$ the set of units that are unsaturated, we have an important relation:

\textbf{Claim}
\[ \norm{\perpart{\delta z}^{(d+1)}}  \geq  \norm{{}^{\perp}W^{(d)} \delta z^{(d)} {1}_{\mathcal{A}}} \]
(where the indicator in the right hand side zeros out coordinates not in the active set.)

To see this, first note, by definition,
\[ \perpart{\delta z^{(d+1)}} = W^{(d)} \delta z^{(d)} \cdot {1}_{\mathcal{A}} - \langle W^{(d)} \delta z^{(d)} \cdot {1}_{\mathcal{A}}, \hat{z}^{(d+1)}  \rangle \hat{z}^{(d+1)} \tag{1}  \]
where the $\hat{\cdot}$ indicates a unit vector.

Similarly
\[ {}^{\perp}W^{(d)} \delta z^{(d)} = W^{(d)} \delta z^{(d)} - \langle W^{(d)} \delta z^{(d)}, \hat{\tilde{z}}^{(d+1)} \rangle \hat{\tilde{z}}^{(d+1)} \tag{2} \]

Now note that for any index $i \in \mathcal{A}$, the right hand sides of (1) and (2) are identical, and so the vectors on the left hand side agree for all $i \in \mathcal{A}$. In particular,
\[ \perpart{\delta z^{(d+1)}} \cdot {1}_{\mathcal{A}} = {}^{\perp}W^{(d)} \delta z^{(d)}  \cdot {1}_{\mathcal{A}} \]
Now the claim follows easily by noting that $\norm{\perpart{\delta z}^{(d+1)}} \geq \norm{  \perpart{\delta z^{(d+1)}} \cdot {1}_{\mathcal{A}}}$.

Returning to (*), we split $\delta z^{(d)} = \perpart{\delta z}^{(d)} + \parpart{\delta z}^{(d)}$, $\perpart{W}^{(d)} = \parperp{W}^{(d)} + \perperp{W}^{(d)}$ (and $\parpart{W}^{(d)}$ analogously), and after some cancellation, we have

\[ \mbold{E}_{W^{(d)}} \left[ \norm{\delta z^{(d+1)}} \right] = \mbold{E}_{\parpart{W}^{(d)}} \mbold{E}_{\perpart{W}^{(d)}} \left[ \left( \sum_{ i \in \mathcal{A}_{\parpart{W^{(d)}}}} \left( (\perperp{W}^{(d)} + \parperp{W}^{(d)})_i \perpart{\delta z}^{(d)} + (\perpar{W}^{(d)} + \parpar{W}^{(d)})_i \parpart{\delta z}^{(d)} \right)^2 \right)^{1/2} \right] \]

We would like a recurrence in terms of only perpendicular components however, so we first drop the $\parperp{W}^{(d)}, \parpar{W}^{(d)}$ (which can be done without decreasing the norm as they are perpendicular to the remaining terms) and using the above claim, have

\[ \mbold{E}_{W^{(d)}} \left[ \norm{ \perpart{\delta z^{(d+1)}}} \right] \geq \mbold{E}_{\parpart{W}^{(d)}} \mbold{E}_{\perpart{W}^{(d)}} \left[ \left( \sum_{ i \in \mathcal{A}_{\parpart{W^{(d)}}}} \left( (\perperp{W}^{(d)})_i \perpart{\delta z}^{(d)} + (\perpar{W}^{(d)})_i \parpart{\delta z}^{(d)} \right)^2 \right)^{1/2} \right] \]

But in the inner expectation, the term $\perpar{W}^{(d)} \parpart{\delta z}^{(d)}$ is just a constant, as we are conditioning on $\parpart{W}^{(d)}$. So using Lemma \ref{lemma_norm_gaussian} we have
\[
\mbold{E}_{\perpart{W}^{(d)}} \left[ \left( \sum_{ i \in \mathcal{A}_{\parpart{W^{(d)}}}} \left( (\perperp{W}^{(d)})_i \perpart{\delta z}^{(d)} + (\perpar{W}^{(d)})_i \parpart{\delta z}^{(d)} \right)^2 \right)^{1/2} \right] \geq 
 \mbold{E}_{\perpart{W}^{(d)}} \left[ \left( \sum_{ i \in \mathcal{A}_{\parpart{W^{(d)}}}} \left( (\perperp{W}^{(d)})_i \perpart{\delta z}^{(d)} \right)^2 \right)^{1/2} \right] \]

 We can then apply Lemma \ref{norm_projections} to get
 \[  \mbold{E}_{\perpart{W}^{(d)}} \left[ \left( \sum_{ i \in \mathcal{A}_{\parpart{W^{(d)}}}} \left( (\perperp{W}^{(d)})_i \perpart{\delta z}^{(d)} \right)^2 \right)^{1/2} \right] \geq \frac{\sigma}{\sqrt{k}} \sqrt{2}\frac{\sqrt{2|\mathcal{A}_{\parpart{W^{(d)}}}| - 3 }}{2} \mbold{E} \left[\norm{ \perpart{\delta z^{(d)}}}  \right] \]
 
 The outer expectation on the right hand side only affects the term in the expectation through the size of the active set of units. For ReLUs, $p = \mbold{P}(h^{(d+1)}_i > 0)$ and for hard tanh, we have $p = \mbold{P}(|h^{(d+1)}_i| < 1)$, and noting that we get a non-zero norm only if $|\mathcal{A}_{\parpart{W^{(d)}}}| \geq 2$ (else we cannot project down a dimension), and for $|\mathcal{A}_{\parpart{W^{(d)}}}| \geq 2$, 
 \[ \sqrt{2}\frac{\sqrt{2|\mathcal{A}_{\parpart{W^{(d)}}}| - 3 }}{2} \geq \frac{1}{\sqrt{2}} \sqrt{|\mathcal{A}_{\parpart{W^{(d)}}}|}    \]
 we get
 \begin{align*}
 \mbold{E}_{W^{(d)}} \left[ \norm{ \perpart{\delta z^{(d+1)}}} \right] & \geq \frac{1}{\sqrt{2}} \left( \sum_{j=2}^k \binom{k}{j} p^j (1 - p)^{k-j} \frac{\sigma}{\sqrt{k}} \sqrt{j}  \right) \mbold{E}\left[ \norm{ \perpart{\delta z^{(d)}}} \right]  
 \end{align*}
 We use the fact that we have the probability mass function for an $(k,p)$ binomial random variable to bound the $\sqrt{j}$ term:
 \begin{align*}
  \sum_{j=2}^k \binom{k}{j} p^j (1 - p)^{k-j} \frac{\sigma}{\sqrt{k}} \sqrt{j} &= - \binom{k}{1} p(1 -p)^{k-1} \frac{\sigma}{\sqrt{k}} + \sum_{j=0}^k \binom{k}{j} p^j (1 - p)^{k-j} \frac{\sigma}{\sqrt{k}} \sqrt{j} \\
  &= - \sigma \sqrt{k} p(1 -p)^{k-1} +  kp \cdot \frac{\sigma}{\sqrt{k}} \sum_{j=1}^k \frac{1}{\sqrt{j}} \binom{k-1}{j-1}p^{j-1}(1 - p)^{k-j}     
 \end{align*}

 But by using Jensen's inequality with $1/\sqrt{x}$, we get
 \[
 \sum_{j=1}^k \frac{1}{\sqrt{j}} \binom{k-1}{j-1}p^{j-1}(1 - p)^{k-j} \geq \frac{1}{\sqrt{\sum_{j=1}^k j \binom{k-1}{j-1}p^{j-1}(1 - p)^{k-j}}} = \frac{1}{\sqrt{(k - 1)p + 1}} \]
where the last equality follows by recognising the expectation of a binomial$(k-1, p)$ random variable. So putting together, we get
\[
 \mbold{E}_{W^{(d)}} \left[ \norm{ \perpart{\delta z^{(d+1)}}} \right] \geq \frac{1}{\sqrt{2}} \left( - \sigma \sqrt{k} p(1 -p)^{k-1} + \sigma \cdot \frac{\sqrt{k}p}{\sqrt{1 + (k-1)p}} \right) \mbold{E}\left[ \norm{ \perpart{\delta z^{(d)}}} \right]
  \tag{a} \]
  
From here, we must analyse the hard tanh and ReLU cases separately. First considering the hard tanh case:

To lower bound $p$, we first note that as $h^{(d+1)}_i$ is a normal random variable with variance $\leq \sigma^2$, if $A \sim \mathcal{N}(0, \sigma^2)$
\[ \mbold{P}(|h^{(d+1)}_i| < 1) \geq \mbold{P}(|A| < 1) \geq  \frac{1}{\sigma \sqrt{2 \pi}} \tag{b} \]
where the last inequality holds for $\sigma \geq 1$ and follows by Taylor expanding $e^{-x^2/2}$ around $0$. Similarly, we can also show that $p \leq \frac{1}{\sigma}$.

So this becomes
\begin{align*}
 \mbold{E}\left[ \norm{\delta z^{(d+1)}} \right] & \geq \left( \frac{1}{\sqrt{2}}  \left( \frac{1}{(2\pi)^{1/4}} \frac{\sqrt{\sigma k}}{\sqrt{\sigma \sqrt{2\pi} + (k - 1)}} - \sqrt{k}\left(1 - \frac{1}{\sigma} \right)^{k-1} \right) \right) \mbold{E}\left[ \norm{ \perpart{\delta z^{(d)}}} \right] \\
 & = O \left( \frac{\sqrt{\sigma k}}{\sqrt{\sigma + k}} \right) \mbold{E}\left[ \norm{ \perpart{\delta z^{(d)}}} \right]
\end{align*}
   
Finally, we can compose this, to get
\[ \mbold{E}\left[ \norm{\delta z^{(d+1)}} \right]
   \geq \left( \frac{1}{\sqrt{2}}  \left( \frac{1}{(2\pi)^{1/4}} \frac{\sqrt{\sigma k}}{\sqrt{\sigma \sqrt{2\pi} + (k - 1)}} - \sqrt{k}\left(1 - \frac{1}{\sigma} \right)^{k-1} \right) \right)^{d+1} c \cdot \norm{ \delta x(t)} \tag{c}\]
 
with the constant $c$ being the ratio of $\norm{ \perpart{\delta x(t)}}$ to $\norm{\delta x(t)}$. So if our trajectory direction is almost orthogonal to $x(t)$ (which will be the case for e.g. random circular arcs, $c$ can be seen to be $\approx 1$ by splitting into components as in Lemma \ref{lemma_matrix_decomp}, and using Lemmas \ref{lemma_norm}, \ref{norm_projections}.)

The ReLU case (with no bias) is even easier. Noting that for random weights, $p = 1/2$, and plugging in to equation (a), we get
\[
 \mbold{E}_{W^{(d)}} \left[ \norm{ \perpart{\delta z^{(d+1)}}} \right] \geq \frac{1}{\sqrt{2}} \left(  \frac{- \sigma \sqrt{k}}{ 2^k }+ \sigma \cdot \frac{\sqrt{k}}{\sqrt{2(k+1)}} \right) \mbold{E}\left[ \norm{ \perpart{\delta z^{(d)}}} \right]
  \tag{d} \]

But the expression on the right hand side has exactly the asymptotic form $O(\sigma \sqrt{k}/\sqrt{k+1})$, and we finish as in (c).

\paragraph{Result for non-zero bias}
In fact, we can easily extend the above result to the case of non-zero bias. The insight is to note that because $\delta z^{(d+1)}$ involves taking a \textit{difference} between $z^{(d + 1)}(t + d t)$ and $z^{(d + 1)}(t)$, the bias term does not enter at all into the expression for $\delta z^{(d+1)}$. So the computations above hold, and equation (a) becomes

\[ \mbold{E}_{W^{(d)}} \left[ \norm{ \perpart{\delta z^{(d+1)}}} \right] \geq \frac{1}{\sqrt{2}} \left( - \sigma_w \sqrt{k} p(1 -p)^{k-1} + \sigma_w \cdot \frac{\sqrt{k}p}{\sqrt{1 + (k-1)p}} \right) \mbold{E}\left[ \norm{ \perpart{\delta z^{(d)}}} \right]   \]

For ReLUs, we require $h_i^{(d+1)} = w_i^{(d+1)} z_i^{(d)} + b_i^{(d+1)} > 0$ where the bias and weight are drawn from  $\mathcal{N}(0, \sigma^2_b)$ and  $\mathcal{N}(0, \sigma^2_w)$ respectively. But with $p \geq 1/4$, this holds as the signs for $w, b$ are purely random. Substituting in and working through results in the \textit{same} asymptotic behavior as without bias. 

For hard tanh, not that as $h_i^{(d+1)}$ is a normal random variable with variance $\leq \sigma^2_w + \sigma^2_b$ (as equation (b) becomes
\[ \mbold{P}(|h^{(d+1)}_i| < 1) \geq  \frac{1}{\sqrt{(\sigma_w^2 + \sigma_b^2)} \sqrt{2 \pi}} \]

This gives Theorem \ref{thm_traj_growth_bias}
\[
 \mbold{E}\left[ \norm{\delta z^{(d+1)}} \right] \geq  O\left( \frac{\sigma_w}{(\sigma_w^2 + \sigma_b^2)^{1/4}} \cdot \frac{\sqrt{k}}{\sqrt{\sqrt{\sigma_w^2 + \sigma_b^2} + k}} \right) \mbold{E}\left[ \norm{ \perpart{\delta z^{(d)}}} \right]
\]

\end{proof}

\begin{figure}[h]
\centering
\adjincludegraphics[width=0.7\linewidth]{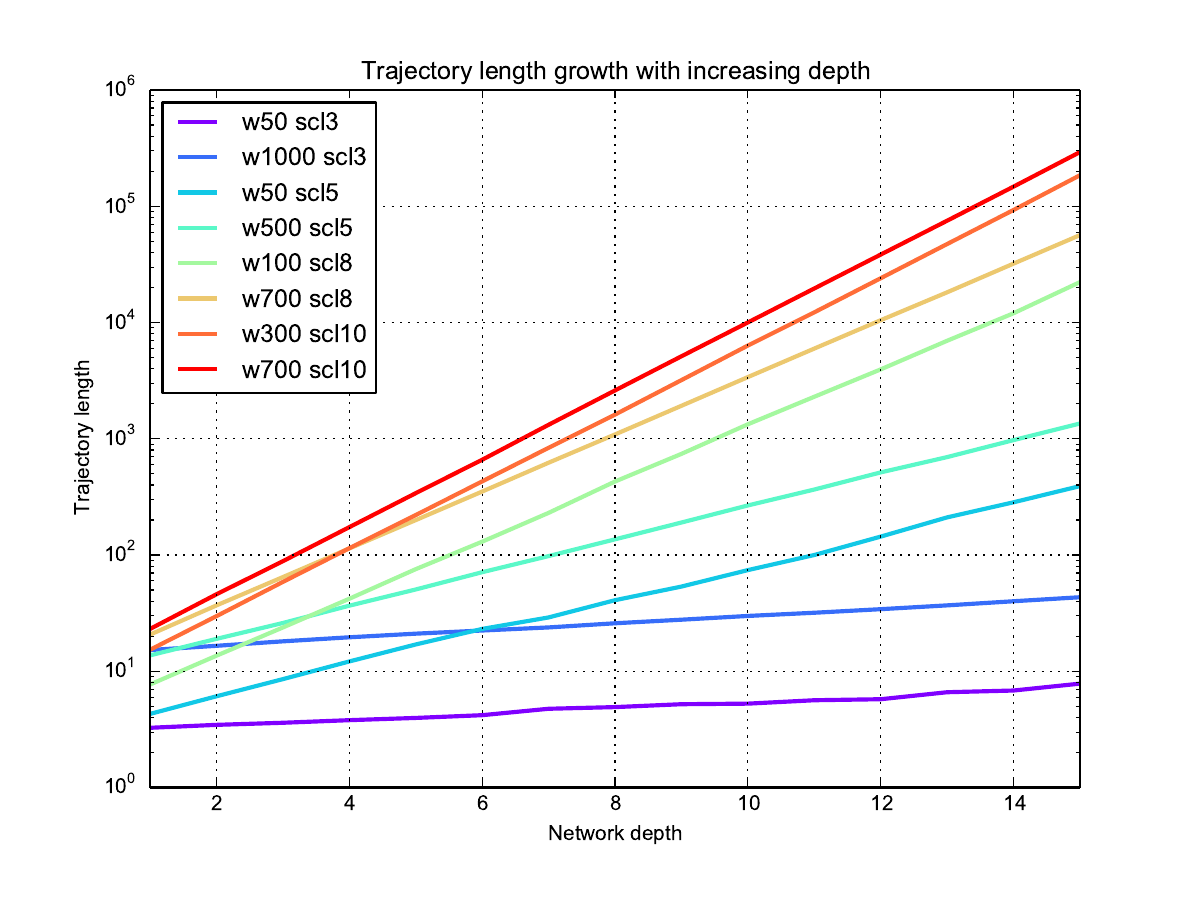}
\caption{The figure above shows trajectory growth with different initialization scales as a trajectory is propagated through  a fully connected network for MNIST, with Relu activations. Note that as described by the bound in Theorem \ref{thm_traj_growth_bias} we see that trajectory growth is 1) exponential in depth 2) increases with initialization scale and width, 3) increases faster with scale over width, as expected from $\sigma_w$ compared to $\sqrt{k/(k+1)}$ in the Theorem.
}
\label{MNIST_traj_length}
\end{figure}
\begin{figure}
\centering
\begin{tabular}{cc}
(a)\adjincludegraphics[width=0.4\linewidth]{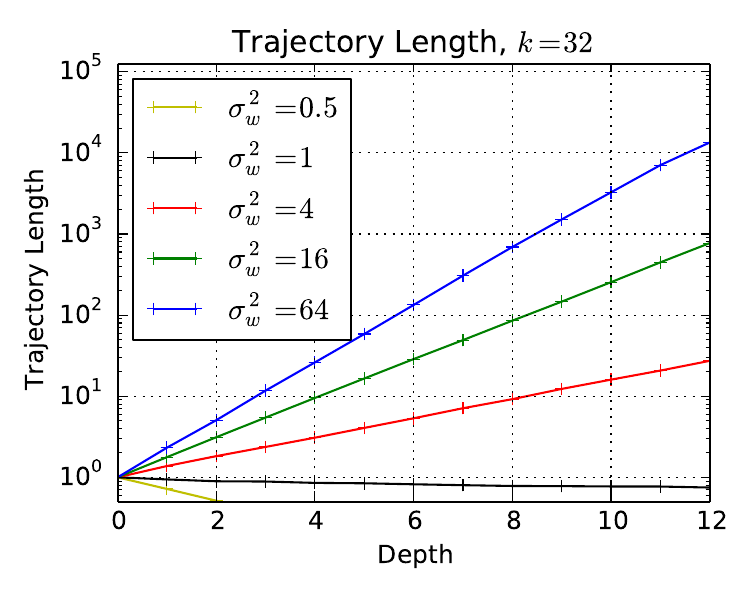} &
(b)\adjincludegraphics[width=0.4\linewidth]{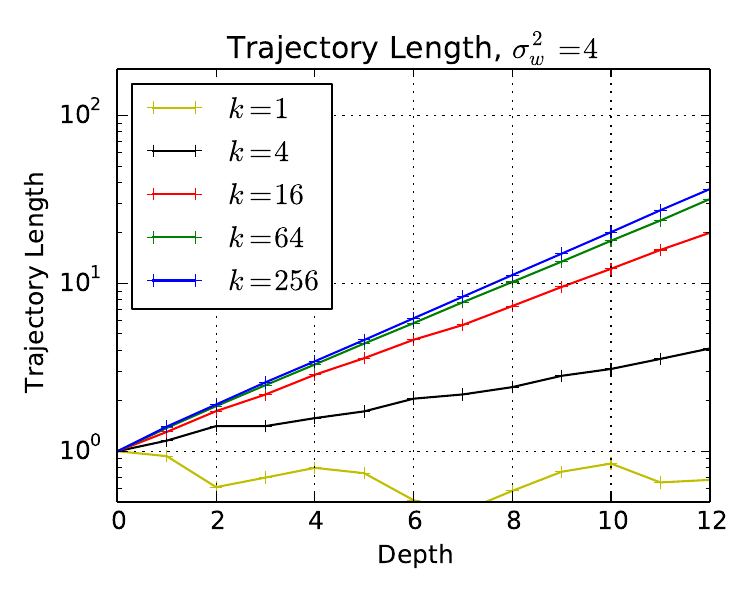} \\
(c)\adjincludegraphics[width=0.4\linewidth]{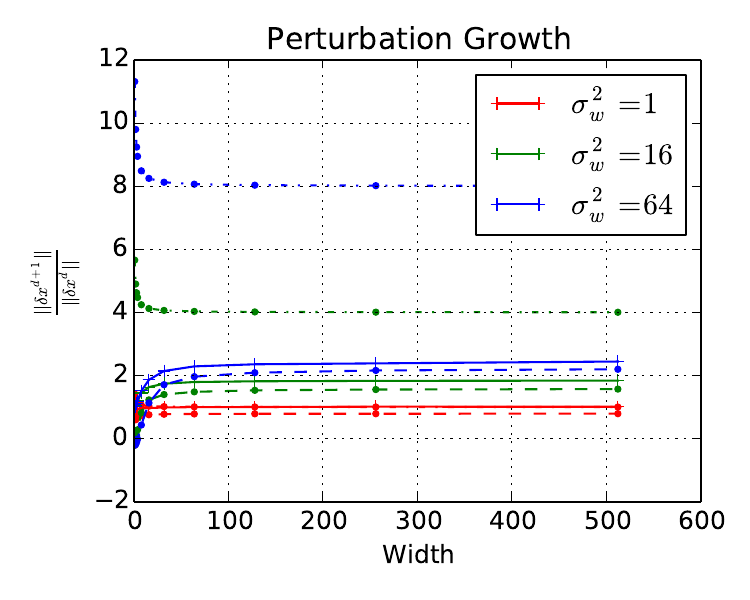} &
(d)\adjincludegraphics[width=0.4\linewidth]{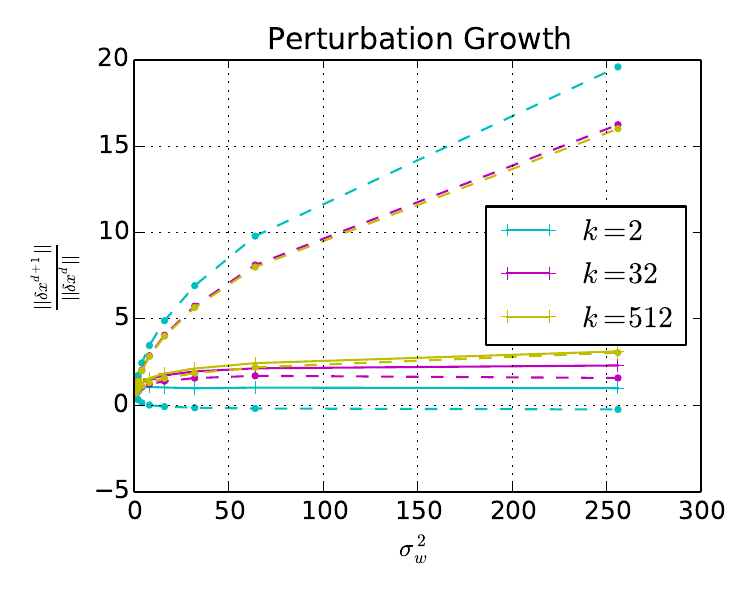} 
\end{tabular}
\caption{
The exponential growth of trajectory length with depth, in a random deep network with hard-tanh nonlinearities.
A circular trajectory is chosen between two random vectors.
The image of that trajectory is taken at each layer of the network, and its length measured.
\emph{(a,b)}
The trajectory length vs. layer, in terms of the network width $k$ and weight variance $\sigma_w^2$, both of which determine its growth rate.
\emph{(c,d)}
The average ratio of a trajectory's length in layer $d+1$ relative to its length in layer $d$.
The solid line shows simulated data, while the dashed lines show upper and lower bounds (Theorem \ref{thm_traj_growth_bias}).
Growth rate is a function of layer width $k$, and weight variance $\sigma^2_w$.
\label{fig growth}
}
\end{figure}

\paragraph{Statement and Proof of Upper Bound for Trajectory Growth for Hard Tanh}
Replace hard-tanh with a linear coordinate-wise identity map, $h^{(d+1)}_i = (W^{(d)}z^{(d)})_i + b_i$. This provides an upper bound on the norm. 
We also then recover a chi distribution with $k$ terms, each with standard deviation $\frac{\sigma_w}{k^{\frac{1}{2}}}$,
\begin{align}
 \mbold{E}\left[ \norm{ \delta z^{(d+1)}} \right] & \leq 
 \sqrt{2} \frac{
    \Gamma\left( (k+1) / 2\right)
 }{
    \Gamma\left( k / 2\right)
 }
 \frac{\sigma_w}{k^{\frac{1}{2}}}
 \norm{ \delta z^{(d)}}
 \\ & \leq
 \sigma_w \left( \frac{k+1}{k}
    \right)^\frac{1}{2} \norm{ \delta z^{(d)}}
,
\end{align} 
where the second step follows from \citep{laforgia2013some}, and holds for $k>1$.

\paragraph{Proof of Theorem \ref{thm_large_weight_limit}}

\begin{proof}
\textbf{For $\sigma_b = 0$:}

For hidden layer $d < n$, consider neuron $\node{d}{1}$. This has as input $\sum_{i=1}^k W^{(d-1)}_{i1} z^{(d-1)}_i$. As we are in the large $\sigma$ case, we assume that $|z^{(d-1)}_i| = 1$. Furthermore, as signs for $z^{(d-1)}_i$ and $W^{(d-1)}_{i1}$ are both completely random, we can also assume wlog that $z^{(d-1)}_i = 1$. For a particular input, we can define $\node{d}{1}$ as \textit{sensitive} to $\node{d-1}{i}$ if $\node{d-1}{i}$ transitioning (to wlog $-1$) will induce a transition in node $\node{d}{1}$. A sufficient condition for this to happen is if $|W_{i1}| \geq |\sum_{j \neq i} W_{j1}|$. But $X = W_{i1} \sim \mathcal{N}(0, \sigma^2/k)$ and $\sum_{j \neq i} W_{j1} = Y' \sim \mathcal{N}(0, (k-1)\sigma^2/k)$. So we want to compute $\mbold{P}(|X| > |Y'|)$. For ease of computation, we instead look at $\mbold{P}(|X| > |Y|)$, where $Y \sim \mathcal{N}(0, \sigma^2)$. 

But this is the same as computing $\mbold{P}(|X|/|Y| > 1) = \mbold{P}(X/Y < -1) + \mbold{P}(X/Y > 1)$. But the ratio of two centered independent normals with variances $\sigma_1^2, \sigma_2^2$ follows a Cauchy distribution, with parameter $\sigma_1/\sigma_2$, which in this case is $1/\sqrt{k}$. Substituting this in to the cdf of the Cauchy distribution, we get that
\[ \mbold{P}\left(\frac{|X|}{|Y|} > 1 \right) = 1 - \frac{2}{\pi} \arctan(\sqrt{k}) \]
Finally, using the identity $\arctan(x) + \arctan(1/x)$ and the Laurent series for $\arctan(1/x)$, we can evaluate the right hand side to be $O(1/\sqrt{k})$. In particular
\[ \mbold{P}\left(\frac{|X|}{|Y|} > 1 \right) \geq O\left(\frac{1}{\sqrt{k}} \right)  \tag{c}\]
This means that in expectation, any neuron in layer $d$ will be sensitive to the transitions of $\sqrt{k}$ neurons in the layer below. Using this, and the fact the while $\node{d-1}{i}$ might flip very quickly from say $-1$ to $1$, the gradation in the transition ensures that neurons in layer $d$ sensitive to $\node{d-1}{i}$ will transition at distinct times, we get the desired growth rate in expectation as follows: 

Let $T^{(d)}$ be a random variable denoting the number of transitions in layer $d$. And let $T_i^{(d)}$ be a random variable denoting the number of transitions of neuron $i$ in layer $d$. Note that by linearity of expectation and symmetry, $\mbold{E} \left[ T^{(d)} \right] = \sum_i \mbold{E} \left[ T^{(d)}_i \right] = k \mbold{E} \left[ T^{(d)}_1 \right]$ 

Now, $ \mbold{E} \left[T^{(d+1)}_1 \right] \geq \mbold{E} \left[ \sum_i 1_{(1,i)} T^{(d)}_i \right] =  k \mbold{E} \left[ 1_{(1,1)}T^{(d)}_1 \right]$ where $1_{(1,i)}$ is the indicator function of neuron $1$ in layer $d+1$ being sensitive to neuron $i$ in layer $d$.

But by the independence of these two events, $\mbold{E} \left[ 1_{(1,1)}T^{(d)}_1 \right] = \mbold{E} \left[1_{(1,1)} \right] \cdot \mbold{E} \left[T^{(d)}_1 \right]$. But the firt time on the right hand side is $O(1/\sqrt{k})$ by (c), so putting it all together, $ \mbold{E} \left[T^{(d+1)}_1 \right] \geq \sqrt{k} \mbold{E} \left[T^{(d)}_1 \right] $.

Written in terms of the entire layer, we have $\mbold{E} \left[T^{(d+1)} \right] \geq \sqrt{k} \mbold{E} \left[T^{(d)} \right] $ as desired.

\textbf{For $\sigma_b > 0$:}

We replace $\sqrt{k}$ with $\sqrt{k( 1 + \sigma_b^2/\sigma_w^2)}$, by noting that $Y \sim \mathcal{N}(0, \sigma_w^2 + \sigma_b^2)$. This results in a growth rate of form $O(\sqrt{k}/\sqrt{1 + \frac{\sigma_b^2}{\sigma_w^2}})$.
\end{proof}

\subsection{Dichotomies: a natural dual} \label{sec_function_space}

Our measures of expressivity have mostly concentrated on sweeping the input along a trajectory $x(t)$ and taking measures of $F_A(x(t); W)$. Instead, we can also sweep the weights $W$ along a trajectory $W(t)$, and look at the consequences (e.g. binary labels -- i.e. \textit{dichotomies}), say for a fixed set of inputs $x_1,...,x_s$. 

In fact, after random initialization, sweeping the first layer weights is statistically very similar to sweeping the input along a trajectory $x(t)$. In particular, letting $W'$ denote the first layer weights, for a particular input $x_0$, $x_0 W'$ is a vector, each coordinate is iid, $\sim \mathcal{N}(0, ||x_0||^2 \sigma_w^2)$. Extending this observation, we see that (providing norms are chosen appropriately), $x_0 W' \cos(t) + x_1 W' \sin(t)$ (fixed $x_0, x_1, W$) has the same distribution as $x_0W'_0 \cos(t) + x_0 W'_1 \sin(t)$ (fixed $x_0, W'_0, W'_1$). 

\begin{figure}
\centering
\begin{tabular}{cc}
(a)\adjincludegraphics[width=0.5\linewidth]{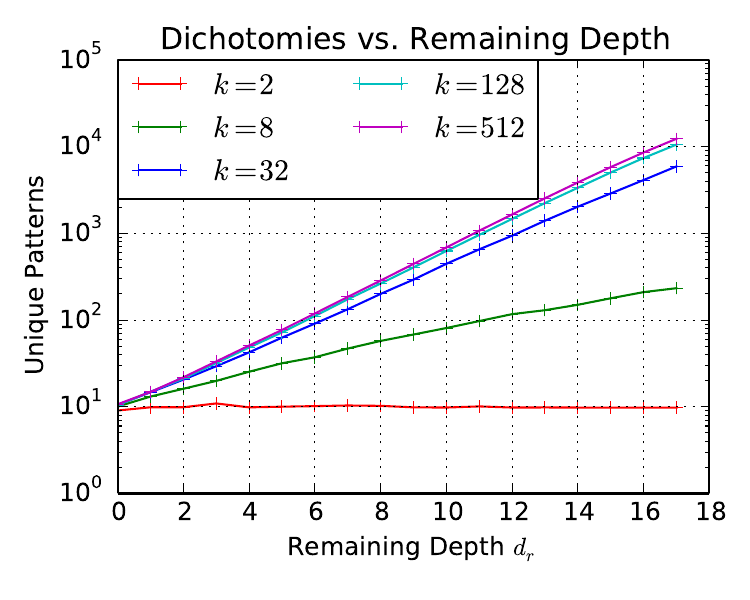} &
(b)\adjincludegraphics[width=0.5\linewidth]{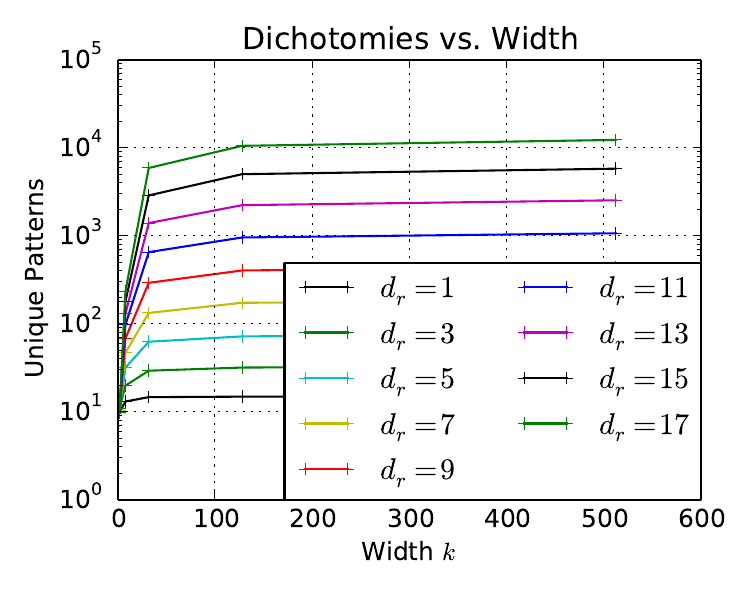}
\end{tabular}
\caption{We sweep the weights $W$ of a layer through a trajectory $W(t)$ and count the number of labellings over a set of datapoints. When $W$ is the first layer, this is statistically identical to sweeping the input through $x(t)$ (see Appendix). Thus, similar results are observed, with exponential increase with the depth of an architecture, and much slower increase with width.
Here we plot the number of classification dichotomies over $s=15$ input vectors achieved by sweeping the first layer weights in a hard-tanh network along a one-dimensional great circle trajectory. We show this  
\emph{(a)} as a function of  depth for several widths, and
\emph{(b)} as a function of width for several  depths.
All networks were generated with weight variance $\sigma_w^2 = 8$, and bias variance $\sigma_b^2 = 0$.
\label{fig patterns depth and width}
}
\end{figure}

So we expect that there will be similarities between results for sweeping weights and for sweeping input trajectories, which we explore through some synthetic experiments, primarily for hard tanh, in Figures \ref{fig power by layer}, \ref{fig transitions to patterns}. We find that the proportionality of transitions to trajectory length extends to dichotomies, as do results on the expressive power afforded by remaining depth.

For non-random inputs and non-random functions, this is a well known question upper bounded by the Sauer-Shelah lemma \citep{sauer1972density}. We discuss this further in Appendix \ref{sec VC}. In the random setting, the statistical duality of weight sweeping and input sweeping suggests a direct proportion to transitions and trajectory length for a fixed input. Furthermore, if the $x_i \in S$ are sufficiently uncorrelated (e.g. random) class label transitions should occur independently for each $x_i$ Indeed, we show this in Figure \ref{fig patterns depth and width}.

\section{Addtional Experiments from Section \ref{experiments}}
Here we include additional experiments from Section \ref{experiments}

\begin{figure}
\centering
\adjincludegraphics[width=0.4\linewidth]{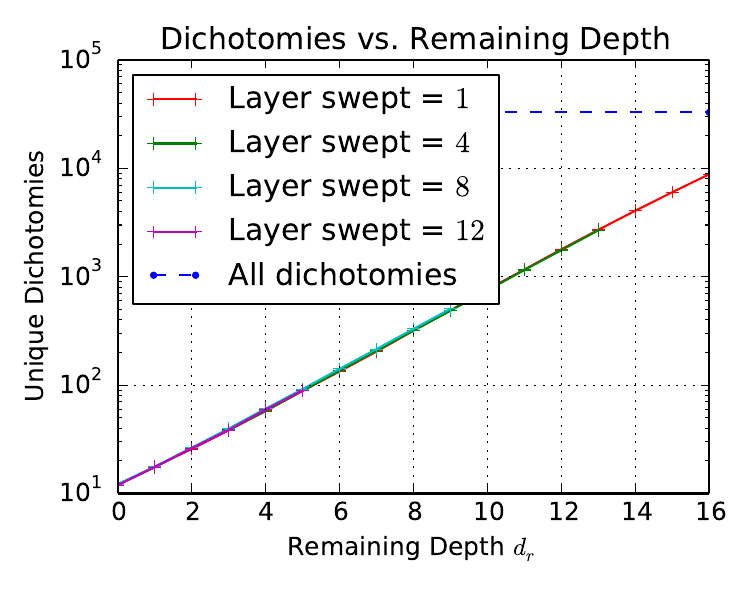}
\caption{
Expressive power depends only on remaining network depth. 
Here we plot the number of dichotomies achieved by sweeping the weights in different network layers through a 1-dimensional great circle trajectory, as a function of the remaining network depth. 
The number of achievable dichotomies does not depend on the total network depth, only on the number of layers above the layer swept. 
All networks had width $k=128$, weight variance $\sigma_w^2 = 8$, number of datapoints $s=15$, and hard-tanh nonlinearities. 
The blue dashed line indicates all $2^s$ possible dichotomies for this random dataset.
\label{fig power by layer}
}
\end{figure}

\begin{figure}
\centering
\adjincludegraphics[width=1.07\linewidth]{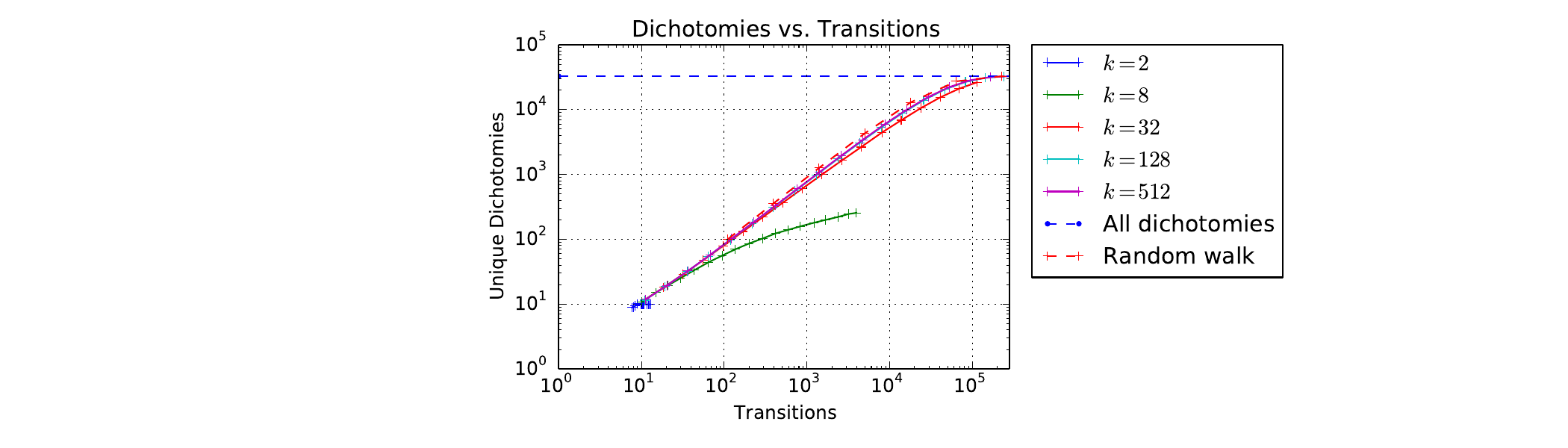}
\caption{
Here we plot the number of unique dichotomies that have been observed as a function of the number of transitions the network has undergone. 
Each datapoint corresponds to the number of transitions and dichotomies for a hard-tanh network of a different depth, with the weights in the first layer undergoing interpolation along a great circle trajectory $W^{(0)}(t)$. 
We compare these plots to a random walk simulation, where at each transition a single class label is flipped uniformly at random.
Dichotomies are measured over a dataset consisting of $s=15$ random samples, and all networks had weight variance $\sigma_w^2 = 16$.
The blue dashed line indicates all $2^s$ possible dichotomies.
\label{fig transitions to patterns}
}
\end{figure}

\begin{figure}
\centering
\adjincludegraphics[width=0.75\linewidth]{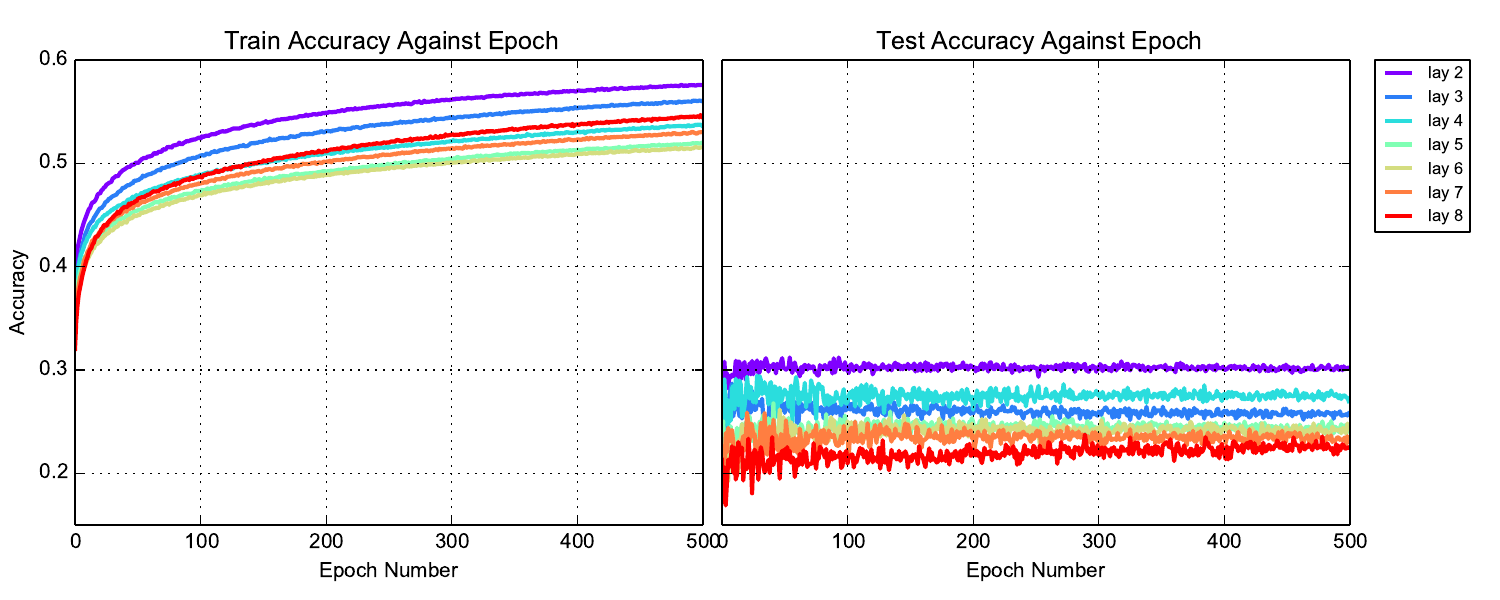}
\caption{
We repeat a similar experiment in Figure \ref{mnist_ffn} with a fully connected network on CIFAR-10, and mostly observe that training lower layers again leads to better performance, although, as expected, overall performance is impacted by training only a single layer. The networks had width $k=200$, weight variance $\sigma_w^2 = 1$, and hard-tanh nonlinearities. We again only train from the second hidden layer on so that the number of parameters remains fixed. The theory only applies to training error (the ability to fit a function), and generalisation accuracy remains low in this very constrained setting.
\label{cifar_ffn}
}
\end{figure}

\begin{figure}
\centering
\adjincludegraphics[width=0.8\linewidth]{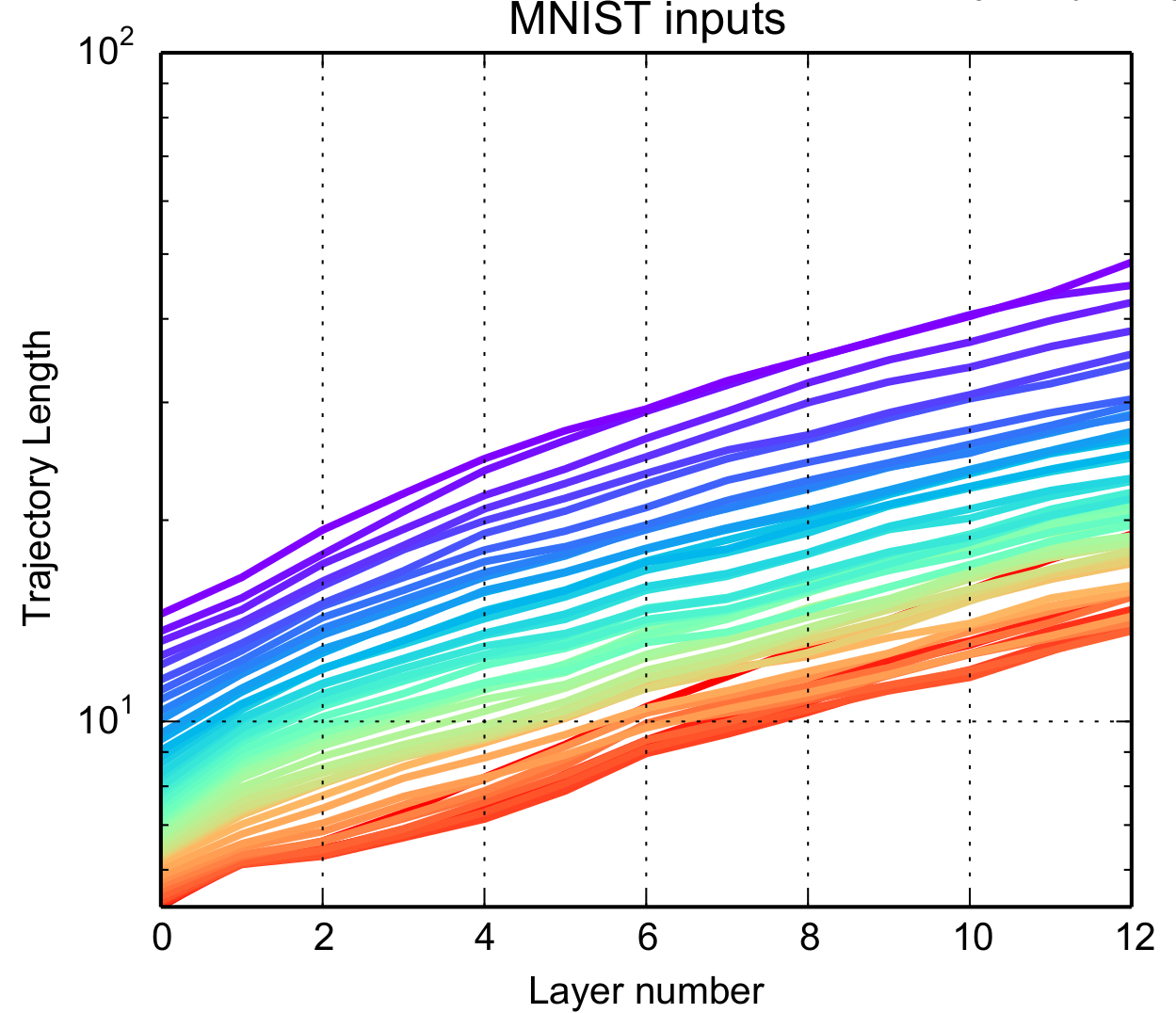}
\caption{
Training increases the trajectory length for smaller initialization values of $\sigma_w$. 
This experiment plots the growth of trajectory length as a circular interpolation between two MNIST datapoints is propagated through the network, at different train steps. Red indicates the start of training, with purple the end of training. We see that the training process \textit{increases} trajectory length, likely to increase the expressivity of the input-output map to enable greater accuracy. 
\label{mnist_traj_trans_sigma_3}
}
\end{figure}

\end{document}